\definecolor{dark_blue_pers}{RGB}{46,87,144}
\definecolor{blue_pers}{RGB}{54,104,171}
\definecolor{grey_pers}{RGB}{245,245,245}
\definecolor{red_pers}{RGB}{213,78,33}
\definecolor{dark_blue}{RGB}{46,87,144}
\definecolor{dark_green}{RGB}{0,100,0}
\newcommand{\argmin}{\operatorname{argmin}}
\newcommand{\nc}{\newcommand}
\nc{\norm}[2]{\left\|#1\right\|_{#2}}
\definecolor{dark_blue}{RGB}{46,87,144}
\newcommand\rev{}
\nc{\IC}{\mathbb{C}}
\nc{\IE}{\mathbb{E}}
\nc{\IN}{\mathbb{N}}
\nc{\IP}{\mathbb{P}}
\nc{\IR}{\mathbb{R}}
\nc{\be}{\begin{equation}}
\nc{\ee}{\end{equation}}
\nc{\fA}{\mathsf{A}}
\nc{\fL}{\mathsf{L}}
\nc{\eps}{\varepsilon}
\nc{\mA}{\mathcal{A}}
\nc{\mB}{\mathcal{B}}
\nc{\mE}{\mathcal{E}}
\nc{\mG}{\mathcal{G}}
\nc{\mL}{\mathcal{L}}
\nc{\mD}{\mathcal{D}}
\nc{\mM}{\mathcal{M}}
\nc{\mN}{\mathcal{N}}
\nc{\mP}{\mathcal{P}}
\nc{\mR}{\mathcal{R}}
\nc{\mT}{\mathcal{T}}
\nc{\bx}{{\bf x}}
\nc{\bn}{{\bf n}}
\nc{\bq}{{\bf q}}
\nc{\bW}{{\bf W}}
\nc{\bA}{{\bf A}}
\nc{\bX}{{\bf X}}
\nc{\bb}{{\bf b}}
\nc{\bz}{{\bf z}}
\nc{\by}{{\bf y}}
\nc{\bu}{{\bf u}}
\nc{\btheta}{{\boldsymbol{\theta}}}
\nc{\bxi}{{\boldsymbol{\xi}}}
\nc{\tk}{{\textup{k}}}
\nc{\tC}{{\textrm{C}}}
\nc{\tB}{{\textrm{B}}}
\nc{\tD}{{\textrm{D}}}
\nc{\tV}{{\textrm{V}}}
\nc{\tU}{{\textrm{U}}}
\nc{\tG}{{\textrm{H}}}
\definecolor{codegreen}{rgb}{0,0.6,0}
\definecolor{codegray}{rgb}{0.5,0.5,0.5}
\definecolor{codepurple}{rgb}{0.58,0,0.82}
\definecolor{backcolour}{rgb}{0.95,0.95,0.92}
\lstdefinestyle{mystyle}{
    backgroundcolor=\color{backcolour},   
    commentstyle=\color{codegreen},
    keywordstyle=\color{magenta},
    numberstyle=\tiny\color{codegray},
    stringstyle=\color{codepurple},
    basicstyle=\ttfamily\footnotesize,
    breakatwhitespace=false,         
    breaklines=true,                 
    captionpos=b,                    
    keepspaces=true,                 
    numbers=left,                    
    numbersep=5pt,                  
    showspaces=false,                
    showstringspaces=false,
    showtabs=false,                  
    tabsize=2
}
\nc{\loc}{{_\textup{loc}}}
\title{Physics-informed neural networks for operator equations with stochastic data\thanks{Submitted to the editors \date.
%\funding{The authors would like to thank FES-UAI postdoc grant, ANID PIA/BASAL FB0002, and ANID/PIA/\newline ANILLOS ACT210096, for financially supporting this research.}}}
\funding{This work was funded by the FES-UAI postdoc grant, ANID Postdoctorado 3230088, ANID PIA/BASAL FB0002, and ANID/PIA/ANILLOS ACT210096.}}}
\author{Paul Escapil-Inchauspé\thanks{Facultad de Ingenier\'ia y Ciencias, Universidad Adolfo Ib\'a\~nez, Santiago, Chile (\email{paul.escapil@edu.uai.cl, gonzalo.ruz@uai.cl}), and Data Observatory Foundation, Santiago, Chile.}
\and Gonzalo A. Ruz\footnotemark[2]
\thanks{Center of Applied Ecology and Sustainability (CAPES), Santiago, Chile.}
}
\begin{document}

\maketitle

% REQUIRED
\begin{abstract}
We consider the computation of statistical moments to operator equations with stochastic data. We remark that application of PINNs---referred to as TPINNs---allows to solve the induced tensor operator equations under minimal changes of existing PINNs code\rev{, and enabling handling of non-linear and time-dependent operators}. We propose two types of architectures, referred to as vanilla and multi-output TPINNs, and investigate their benefits and limitations. Exhaustive numerical experiments are performed; demonstrating applicability and performance; raising a variety of new promising research avenues.
\end{abstract}

% REQUIRED
\begin{keywords}
physics-informed neural networks, uncertainty quantification, tensor operator equations
\end{keywords}

% REQUIRED
\begin{MSCcodes}
65Mxx, %Partial differential equations, initial value and time-dependent initial-boundary value problems
35R60 %Partial differential equations with randomness, stochastic partial differential equations 
\end{MSCcodes}

\section{Introduction}\label{sec:intro}Uncertainty quantification (UQ) is paramount in domains ranging from \rev{aerospace} exploration to electronic design automation. Uncertain data or source term delivers an abstract operator equation with stochastic data of the form:
\be\label{eq:stochasticData}
\fA u (\omega) = f (\omega)\quad \IP\text{-a.e. }\omega \in \Omega.
\ee
We aim at computing the statistical moments for $u(\omega)$, as being for any integer $\tk \geq 1:$
$$
\mM^\tk[u] : = \int_\Omega u(\bx_1,\omega) \cdots u(\bx_k,\omega) d \IP,
$$
the latter amounting \rev{to solving} the following tensor operator equation:
$$
(\fA \otimes \cdots \otimes \fA) \mM^\tk[u]   =\mM^\tk[f].
$$
Tensor operator equations are prone to the infamous curse of dimensionality \cite{vonPetersdorff2006}. To circumvent this limitation, coupling: (i) numerical scheme such as finite or boundary element methods \cite{steinbach2007numerical}; with (ii) sparse tensor approximation \cite{gerstner1998numerical} is common in literature. We refer to the reference work of von Petersdorff and Schwab \cite{vonPetersdorff2006} and applications \cite{sparse3,escapil2020helmholtz,multigroup}.

\rev{However}, the aforementioned method:
\begin{enumerate}
\item Restricts to strongly elliptic linear operators \cite[Section 2]{vonPetersdorff2006}\rev{;}
\item Can be non-trivial to implement, as numerical solvers are not adapted to high\rev{-}order tensors. Indeed, they commonly only deliver matrix-product operations\rev{;}
\item Leads to optimal yet possibly slow convergence for iterative solvers \cite[Section 6.2]{escapil2020helmholtz}.
\end{enumerate}
Concerning Item 1., non-linear operators are amenable to linear tensor equations with stochastic data under additional requirements \cite{chernov2013first}. Regarding Item 2., notice that for matrices $\bA , \bX \in \IC^{N,N}$ and integer $N\geq 1$, there holds that \cite{MIKAMatrixFreeKL}:
\be 
(\bA \otimes \bA) \textbf{vec}(\bX) = \bA \bX\bA^T,
\ee
where $\textbf{vec}(\bX)$ stacks the columns of $\bX$ one below the others, and $\bA^T$ is the transpose of $\bA$. This allows to solve tensor operator equations for $\tk=2$ provided a matrix-matrix class. We refer the reader to \cite[Section 2.2]{MIKAMatrixFreeKL} for higher $\tk$ and sparse matrices. To the authors\rev{'} knowledge, numerical experiments in \rev{the} literature are restricted to $\tk=2$, despite enjoying a complete theory \cite{schwab2003sparseHigh}.

Recently, physics-informed neural networks (PINNs) were introduced in \cite{RAISSI2019686}. Praised for their versatility, they apply to forward and inverse problems involving partial differential equations (PDEs). They are constructed over deep neural networks \cite{bengio2017deep}, inheriting their capability to approximate high-dimensional and non-linear mappings \cite{schwab2019deep,scarabosio2021deep}. Amongst \rev{other} applications of PINNs, we mention inverse problems \cite{Chen2020}, inverse design \cite{luluhard2021}, and fractional operators \cite{fPINNs}. Recently, Mishra and Molinaro \cite{Mishra2020EstimatesOT,Mishra2021Inverse} proposed a general theory to quantify the generalization error for PINNs.

All throughout, we restrict to UQ for deterministic PDEs with a random load in \eqref{eq:stochasticData}. For the sake of completeness, we mention UQ for parametric stochastic operators \cite{HOQMC} (e.g.~elliptic operators \cite{barth2011multi}) and for stochastic PDEs via PINNs \cite{zhangSPDEPINNs}. \rev{Furthermore,} deep neural networks (resp.~PINNs) were used to model uncertain surrogates \cite{scarabosio2021deep,DeepUQ} (resp.~\cite{ZHU201956}). We also put forth Bayesian PINNs \cite{yang2021b}, PINNs for determining the total uncertainty \cite{ZHANG2019108850}, and GAN PINNs \cite{yang2019adversarial}. A complete overview of these \rev{methods} and their practical implementation \rev{is} available in NeuralUQ\footnote{\url{https://github.com/Crunch-UQ4MI/neuraluq}} \cite{zou2022neuraluq}. Lastly, we mention multi-output (MO)-PINNs for UQ \cite{yang2022multi}.

In this work, we apply PINNs to tensor operator equations, referred to as Tensor PINNs (TPINNs). These novel PINNs inherits the following interesting properties:
\begin{itemize}\setlength{\itemsep}{0pt}
\item[$\checkmark$] They scale well with increasing $\tk$;
\item[$\checkmark$] They require minimal changes---specifically, adapt the data generation---to existing PINNs code;
\item[$\checkmark$] They allow to consider time-dependent and non-linear problems;
\item[$\checkmark$] They can be extended to---ill-posed---inverse problems \cite{Chen2020}.
\end{itemize}

Moreover, PINNs solve operator equations in strong form. This allows for a rather simple expression for the Kronecker product (as opposed to operators equations in weak form \cite{vonPetersdorff2006} or matrices). Furthermore, operator equations can often be described in (higher) \emph{mixed Sobolev regularity spaces} \cite{vonPetersdorff2006}, this extra natural smoothness supporting the use of strong form---though being out of the scope of this work.

Inspired by the framework in \cite{Mishra2020EstimatesOT}, we provide a bound for the generalization error of vanilla (V)-TPINNs in \Cref{thm:boundGeneralization}. This result allows to understand better TPINNs and supplies a strong theoretical background to their formulation.

Alongside, we remark that TPINNs involve higher order differentiation as $\tk$ increases (refer also to \cite{zhu2021local,gladstone2022fo}). Accordingly, we introduce a MO variant of V-TPINNs, referred to as MO-TPINNs, and applicable for the scalar case. It consists \rev{of} using successive applications of the operator as variables, bounding the order of differentiation for the loss with $\tk$. This results in lowering the computational cost at the expense of an increase of hyper-parameters---the $\tk$ terms in the loss function.

Numerical experiments demonstrate the practical simplicity of TPINNs and their prominent performance. They allow to obtain a surprisingly accurate approximation for few collocation points $N$ and training epochs, e.g,~a $L^2$ relative error of $\sim 5\%$ for $N\sim 10^3$ collocation points after $\sim 10^4$ epochs for $\tk=2$ and a two dimensional Helmholtz equation (see later on in \Cref{subsec:Helmholtz}).

We provide a fair comparison of V-TPINNs and MO-TPINNs, and study the trade-off between training performance and computational requirements. We consider separable and Gaussian covariance kernels as right-hand side. This work is structured as follows: We formulate the framework for operator equations with stochastic data in \Cref{sec:Opeq}, we introduce TPINNs in \Cref{sec:TensorPINNs} and supply convergence analysis in \Cref{sec:Convergence}. Next, we discuss implementation in \Cref{sec:Implementation}, we present numerical experiments in \Cref{sec:NumExp} and conclude in \Cref{sec:Conclusion}.

\section{Operator equations with stochastic data}\label{sec:Opeq}To begin with, we set the notations that will be used all throughout this manuscript.
\subsection{General Notation}\label{subsec:GeneralNotation}For a natural number $\tk$, we set $\IN_\tk : = \{\tk,\tk+1,\cdots \}$. Let $D \subseteq \IR^d$ for $d \in \IN_1$ be an open set. For $p>0$, $L^p(D)$ is the standard class of functions with bounded $L^p$-norm over $D$. Given $s\in \IR$, $q\geq 0$, $p \in [1,\infty]$, we refer to \cite{steinbach2007numerical} for the definitions of Sobolev function spaces $H^s(D)$. Norms are denoted by $\|\cdot\|$ with subscripts indicating the associated functional space. For $\tk\in\IN_1$ and $\bx_i \in \IR^d$, $i=1,\cdots,\tk$, we set $\bx := (\bx_1, \cdots ,\bx_\tk)$. $\tk$-fold tensors quantities are denoted with parenthesized \rev{super}scripts, e.g.~$f^{(\tk)} : = f \otimes \cdots \otimes f$. Their diagonal part is referred as:
\be\label{eq:diagonal}
\text{diag}(f^{(\tk)}) = f^{(\tk)}|_{\bx_1 = \cdots = \bx_\tk}= f(\bx_1) \otimes \cdots\otimes f(\bx_1).
\ee
For $X,Y$ separable Hilbert spaces,
we set $\fA \in \mB(X,Y)$ the space of bounded mappings from $X$ to $Y$ and define the unique bounded tensor product operator \cite{omran2016some}:
$$
\fA^{(\tk)}: = \fA \otimes \cdots \otimes \fA \in \mB( X^{(\tk)}, Y^{(\tk)}).
$$
Notice that if $\fA$ is continuous, $\mB(X,Y)\equiv \mL(X,Y)$, the latter being the space of continuous linear operators.

\subsection{Abstract problem}\label{subsec:AbstractProblem}
All throughout this manuscript, let $(\Omega, \mA, \IP)$ be a probability space, $X,Y$ separable Hilbert spaces, and $\tk\in \IN_1$. For $u:\Omega \to X$ a random field in the Bochner space $L^\tk(\Omega,\IP;X)$ \cite{vonPetersdorff2006}, we introduce the statistical moments:
\be\label{eq:defmoment}
\mM^\tk [ u(\omega) ] : = \int_\Omega u(\bx_1, \omega) \cdots u(\bx_\tk, \omega) d\IP(\omega),
\ee
with $\mM^1=\IE$ being the expectation.

The abstract problem reads: Given $\fA :X \to Y$ and $f \in L^\tk(\Omega,\IP;Y)$, we seek $u \in L^\tk(\Omega,\IP;X)$ such that:
\be \label{eq:stochastic}
\fA u(\omega) = b(\omega) \quad \text{for } \IP\text{-a.e. }\omega \in \Omega.
\ee
Application of \eqref{eq:defmoment} to \eqref{eq:stochastic} yields the following operator equation: Given $\tC^\tk := \mM^\tk[f] \in Y^{(\tk)}$, we seek $\Sigma^\tk := \mM^\tk[u]\in X^{(\tk)}$ such that
\be\label{eq:opeq} 
\fA^{(\tk)}\Sigma^\tk = \tC^\tk .
\ee
In the sequel, we assume that \eqref{eq:opeq} has a unique solution. Furthermore, we suppose that $\fA$ admits a linearization $\fL \in \mL(X,Y)$ with bounded inverse $\fL^{-1} \in \mL(Y,X)$, i.e.
\be\label{eq:linearization}
\fA u - \fA v = \fL (u -v) \quad \forall \, u,v\in X
\ee
with 
$$
\|\fL^{-1}\|_{Y \to X} \leq \gamma_\fL^{-1}  <\infty
$$
wherein $\gamma_\fL> 0$. This setting corresponds to Example 2 in \cite[Section 2.1]{Mishra2020EstimatesOT}. Thus, one has that for any $u,v \in X$:
\be\label{eq:continuousdep1}
\|u-v\|_X =\| \fL^{-1} (\fA u-\fA v)\|_X\leq \gamma_\fL^{-1} \|\fA u -\fA v\|_Y.
\ee
As a consequence, application of the statistical moments to \eqref{eq:linearization} for $u,v\in L^{(\tk)}(\Omega,\IP;X)$ yields a linearization for $\fA^{(\tk)}$ as:
$$
\fA^{(\tk)}\tU-\fA^{(\tk)}\tV = \fL^{(\tk)}(\tU - \tV),
$$
wherein $\tU := \mM^{(\tk)}[u]$ and $\tV := \mM^{(\tk)}[v]$. Finally, there holds that:
\be\label{eq:continuousdep2}
\| \tU - \tV \|_{X^{(\tk)}}  = \left\| (\fL^{-1})^{(\tk)} \left(\fA^{(\tk)} \tU - \fA^{(\tk)} \tV \right) \right\|_{X^{(\tk)}} \leq \frac{1}{\gamma_{\fL}^\tk} \left\| \fA^{(k)}\tU - \fA^{(\tk)}\tV\right\|_{Y^{(\tk)}}.
\ee
Stability bound \eqref{eq:continuousdep2} will be key in yielding a bounded generalization error for TPINNs in \Cref{sec:Convergence}. 

Generally, operator equations are used to describe boundary value problems \cite{vonPetersdorff2006} and can be recast as:
\be \label{eq:decomposition}
\fA = (\fA_D,\fA_B)
\ee
with $\fA_D$ and $\fA_B$ defined over vector spaces with values in $D$ and $\partial D$ respectively---$\partial D=\Gamma$ or $\Gamma \times (t=0)$ for time-dependent problem. This observation will be key in simplifying TPINNs later on in \Cref{sec:TensorPINNs}.

\section{TPINNs}\label{sec:TensorPINNs}The proposed method consists in solving operator equation \eqref{eq:opeq} by means of PINNs. For the sake of simplicity, we restrict \rev{ourselves} to $Y=L^2(D;\IR^m)$. Notice that we could apply our framework to UQ for inverse problems (refer e.g.~to \cite{Mishra2021Inverse}). Let $\sigma$ be a smooth activation function. \rev{Throughout,} we consider an input $\bx = (\bx_1,\cdots ,\bx_\tk)\in D^{(\tk)}$. Following \cite[Section 2.1]{lu2021deepxde} and notations in \cite{escapil2020helmholtz}, we define $\mN\mN$ as being a $L$-layer neural network with $\mN_l$ neurons in the $l$-th layer for $1 \leq l \leq L-1$ ($\mN_0= d \tk$ and $\mN_L$ to be determined later on). For $1 \leq l \leq L$, let us denote the weight matrix and bias vector in the $l$-th layer by $\bW^l \in \IR^{\mN_l \times \mN_{l-1}}$ and $\bb^l \in \IR^{\mN_l}$, respectively. A mapping $\bx\mapsto \bz(\bx)$ can be approximated by a deep (feedforward) neural network defined as follows:
\be\label{eq:TPINNs} 
\begin{array}{rll}
\text{input layer:} \quad & \bx \in (\IR^d)^{(\tk)},\\
\text{hidden layers:} \quad & \bz^l (\bx) = \sigma( \bW^l \bz^{l-1} (\bx) + \bb^l) \in \IR^{\mN_l} \quad \text{for} \quad 1 \leq l \leq L-1,\\
\text{output layer:} \quad & \bz^L(\bx) = \bW^L\bz^{L-1} (\bx) + \bb^L  \in \IR^{\mN_L}. 
\end{array}
\ee

Finally, we introduce the collocation points $\mT := \left\{ \bx_i |~ \bx_i\in D^{(\tk)}\right\}_{i=1}^N$ of cardinality $N \in \IN_1$. PINNs are commonly optimized via ADAM \cite{kingma2014adam} with a given learning rate $l_r$ over a fixed number of $\text{epochs}$. Derivatives are evaluated by means of automatic differentiation (AD) \cite{lu2021deepxde}, allowing \rev{us} to consider general pseudo-differential operators. Further application of L-BFGS \cite{lbfgs} can improve training \cite{lu2021deepxde}. 

In this work, we make an extensive use of hard boundary conditions (BCs) \cite{luluhard2021,lagaris1998artificial,lu2021deepxde} in order to restrict the collocation points to $D^{(\tk)}$ and to \rev{greatly simplify} TPINNs (refer to \Cref{rmk:hardBCs} for more details). Hard BCs consist in applying a transformation $\Sigma^\tk\mapsto \hat{\Sigma}^\tk$ in such a way that $\hat{\Sigma}^\tk$ fulfills the BCs.

\begin{remark}[Hard BCs]\label{rmk:hardBCs}Using a transformation to enforce hard BCs allows to greatly simplify the formulation for TPINNs by reducing the tensor operator equation over $D^{\tk}$. For instance, let us consider the Laplace operator with Dirichlet BCs $\gamma_D u \mapsto u|_\Gamma$. The formulation for $\tk=2$ is
$$
\begin{cases}
(-\Delta \otimes -\Delta ) \mM^2[u]& = \mM^2[f_D] \quad \text{in}\quad D\times D,\\
(-\Delta  \otimes \gamma_{D} )\mM^2[u] &= \IE[f_Dg] \quad\text{in } \quad D \times \Gamma,\\
(\gamma_{D} \otimes -\Delta  ) \mM^2[u] &= \IE[gf_D] \quad \text{in } \quad \Gamma \times D,\\
(\gamma_{D} \otimes \gamma_{D}) \mM^2[u]& = \mM^2[g]\quad  \text{on } \quad \Gamma \times \Gamma.
\end{cases}
$$
In this case, the tensor equation for $\tk$ has $2^\tk$ terms, which would originate a $2^\tk$-terms loss function for V-TPINNs in \eqref{eq:Loss}.
\end{remark}

One generally surveys that the model generalizes well by predicting the solution over a $N^\text{test}$-cardinality training set $\mT^\text{test}$ defined again over $D^{(\tk)}$. For TPINNs with constant width, we set $\mN = \mN_j$, $j=1,\cdots,L-1$.

\subsection{V-TPINNs}\label{subsec:VTPINNs}V-TPINNs correspond to setting $N_L=m$ in \eqref{eq:TPINNs} and applying \eqref{eq:TPINNs} to \eqref{eq:opeq}, delivering an approximation
\be\label{eq:solVTPINNs} 
\bz^L=\Sigma_\theta^\tk.
\ee
The residual for V-TPINNs is:
\be \label{eq:residualVTPINNs}
\bxi_\theta : = \fA^{(\tk)}\Sigma^\tk_\theta - \tC^\tk
\ee
and the loss function reads (for a MC points distribution):
\be\label{eq:Loss}
\mL_\theta :=\frac{1}{N}\sum_{\bx \in \mT}  \bxi_\theta^2 .
\ee
We seek at obtaining:
$$
\theta^\star := \argmin_{\theta \in \Theta} \mL_\theta 
$$
with $\mL_\theta$ in \eqref{eq:Loss} yielding
\be\label{eq:bestOpt}
\Sigma_\star^\tk := \Sigma_{\theta^\star}^\tk.
\ee 
When hard BCs are used, notice that the residual \eqref{eq:residualVTPINNs} reduces to 
$$\bxi_\theta : = \fA_D^{(\tk)}\Sigma^\tk_\theta - \tC_D^\tk\quad \text{with}\quad \Sigma^\tk_\theta \equiv \Sigma^\tk_{D,\theta} \quad \text{and} \quad \tC_D = \mM^{\tk}[f|_D].$$

An example of V-TPINN is showcased in \Cref{fig:tensorPINN}. It is paramount to \rev{note} that the loss function has only one term, but that the latter involves the evaluation of $\fA^{(\tk)}$. Also, $\tk$ only impacts the input layer, affecting moderately---linearly---the size of the neural network. 

\tikzset{every picture/.style={line width=0.75pt}} %set default line width to 0.75pt        
\begin{figure}[htb!]
\centering
\resizebox{10.5cm}{!} {
\input{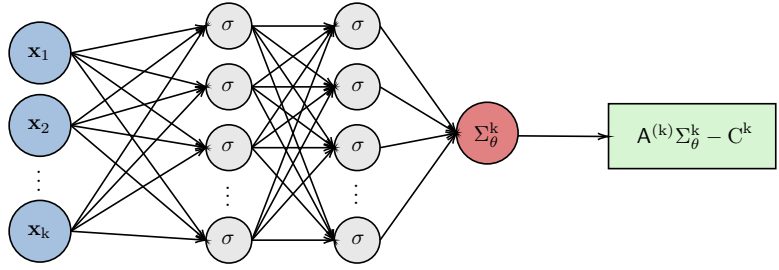}
}
\caption{Schematic representation of a V-TPINN. The neural networks \rev{have} $L-1=2$ hidden layers.}
\label{fig:tensorPINN}
\end{figure}

\subsection{MO-TPINNs}\label{subssec:MOTPINNs}As will be noticed later in numerical experiments, V-TPINNs can become impractical for increasing $\tk$ when $\fA$ involves derivatives. For instance, set $\fA_D=-\Delta$ (refer e.g.~to the example in \Cref{rmk:hardBCs}). The Laplace operator involves a second-order derivative; hence the TPINN loss involves $2 \tk$-order derivatives. \rev{However,} it is known that AD scales poorly with high-order derivatives (refer to \cite{zhu2021local,gladstone2022fo}). To be precise, the computational requirements of AD grow exponentially with the order of differentiation \cite{bettencourt2019taylor}. In \cite{bettencourt2019taylor}, the authors propose a method to lower this cost for Jax \cite{jax2018github} backend. Though, the latter does not allow to evaluate mixed derivatives, as occurs in V-TPINNs.

To remedy this concern, we introduce a MO architecture, inspired by local Galerkin methods \cite{zhu2021local} and \cite{gladstone2022fo}. 

Let us assume that $\fA = (\fA_D,\fA_B)$ in \eqref{eq:decomposition} and $N_L = m=1$ in \eqref{eq:TPINNs} i.e.~$Y=L^2(D;\IR)$. There holds that:
$$
\fA_D \otimes \cdots \otimes \fA_D = \fA_D \circ \cdots \circ \fA_D.
$$
We set:
\be\label{eq:MOPINNssol}
\begin{cases}
\fA_D \tV_1 & = \tC_D^\tk, \\
\fA_D \tV_2 &= \tV_1 ,\\
 &\vdots \\
\fA_D \rev{\Sigma^\tk} & = \rev{\tV_{\tk-1}}.
\end{cases}
\ee 
MO-TPINNs amount to use hard BCs and \rev{apply} \eqref{eq:TPINNs} to $(\tV_{1,\theta},\cdots, \tV_{\tk,\theta})$ in \eqref{eq:MOPINNssol}, hence the approximate:
\be 
\bz^L =  (\tV_{\rev{1,}\theta},\cdots, \tV_{\tk,\theta}).
\ee
Having introduced the residuals
\be \label{eq:residualsMOTPINNs}
\begin{cases}
\bxi_1 & = \fA\rev{_D} \tV_1  - \rev{\tC^\tk_D}, \\
 \bxi_2 &=  \fA\rev{_D} \tV_2 - \tV_1, \\
 &\vdots \\
\bxi_{\tk} & =\rev{\fA_D \Sigma^\tk - \tV_{\tk-1}},
\end{cases}
\ee 
the composite loss function is defined for any $\upomega_j >0$, $j=1,\cdots,\tk$ as:
\be
\mL_\theta =  \sum_{j=1}^\tk \upomega_j \mL_\theta^j
\ee
with 
\be\label{eq:LossMO}
\mL_\theta^j =  \frac{1}{N}\sum_{\bx \in \mT}  \bxi_{j,\theta}(\bx)^2.
\ee
Acknowledge that the residual in \eqref{eq:residualsMOTPINNs} does not involve powers of $\fA$, though inducing an output and loss function with $\tk$ terms. In \rev{summary}, we represent a MO-TPINN in \Cref{fig:MO-TPINN}. 
\tikzset{every picture/.style={line width=0.75pt}} %set default line width to 0.75pt        
\begin{figure}[htb!]
\centering
\resizebox{10.5cm}{!} {
\input{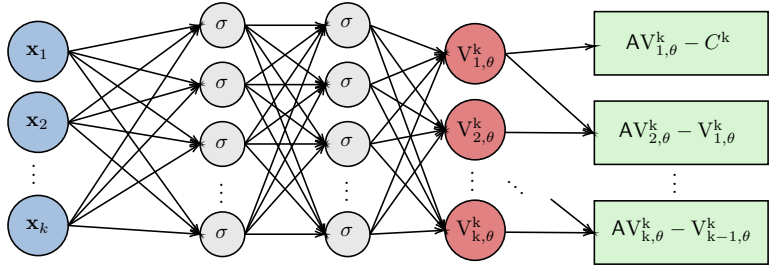}
}
\caption{Schematic representation of a MO-TPINN. The neural networks has $L-1=2$ hidden layers.}
\label{fig:MO-TPINN}
\end{figure}

\section{Convergence analysis for V-TPINNs}\label{sec:Convergence}In this section, we apply the proceeding of \cite{Mishra2020EstimatesOT} to V-TPINNs. We restrict to V-TPINNs, \rev{since} stability bounds for MO-TPINNs are more involved and remain elusive. The residual PINNs loss in \eqref{eq:Loss} induces approximating integrals for $\tG \in Y^{(\tk)} = L^2(D;\IR^m)^{(\tk)}$:
$$
\hat{\tG} = \int_{D^{(\tk)}} \tG (\bx_1, \cdots ,\bx_\tk) \ d \bx_1 \cdots d \bx_\tk.
$$
We assume that we are given a quadrature to approximate $\hat{\tG}$ as:
\be 
\hat{\tG}_N : = \sum_{i=1}^N w_i \tG ( \by^i)
\ee
for weights $w_i$ and quadrature points $\by^i \in D^{(\tk)}$. We further assume that the quadrature error is bounded as:
\be \label{eq:boundQuadrature}
| \hat{\tG}_N - \hat{\tG} | \leq c_\text{quad} (\|\tG\|_{Y^{(\tk)}}) N^{-\alpha}
\ee
for some $\alpha > 0$. Concerning $\alpha$, Monte-Carlo sampling methods yield $\alpha=1/2$ in the root mean square and are a method of choice for very high dimensions \cite{Mishra2020EstimatesOT}. Notice that we defined the loss in this sense in \Cref{eq:Loss} and \eqref{eq:LossMO}. For moderately high dimensions, we can use Quasi Monte-Carlo sampling methods, whose performance relies on the smoothness of $Y$.

We seek at quantifying the total error, also referred to as \emph{generalization error}: 
\be\label{eq:genErrorDef}
\eps_G = \eps_G(\theta^\star) := \|\Sigma^\tk_{\star} - \Sigma^\tk \|_{X^{(\tk)}}
\ee
with $\Sigma^\tk_\star$ and $\Sigma^\tk_\theta$ in \eqref{eq:solVTPINNs} and \eqref{eq:bestOpt} respectively. 

The \emph{training error} is given by:
\be\label{eq:TrainingError}
\eps_T (\theta)^2 = \sum_{i=1}^N w_i \bxi_\star (\by_i)^2 \approx \|\bxi_\star\|_{Y^{(\tk)}}.
\ee
Acknowledge that \eqref{eq:genErrorDef} measures the total error in the domain space, while \eqref{eq:TrainingError} is for the residual in range space. We aim at understanding how these errors relate with each other.

In \Cref{fig:ConvergenceBounds}, we depict the error for V-TPINNs. Remark that the generalization error is bounded from below by the best approximation error of the neural net class (the class of functions that the neural net can approximate). 

\tikzset{every picture/.style={line width=0.75pt}} %set default line width to 0.75pt        
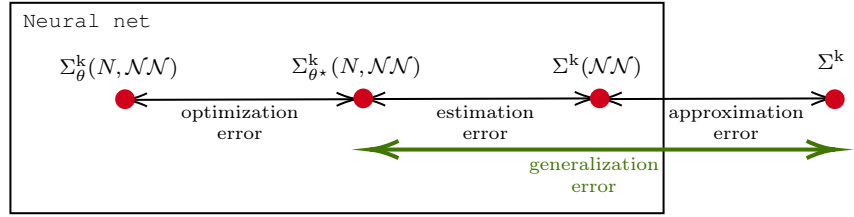
\begin{figure}[htb!]
\centering
\resizebox{11.5cm}{!} {
\tikzset{every picture/.style={line width=0.75pt}} %set default line width to 0.75pt        

\begin{tikzpicture}[x=0.75pt,y=0.75pt,yscale=-1,xscale=1]

%Straight Lines [id:da7095004739352204] 
\draw    (139.3,54.29) -- (251.4,53.96) ;
\draw [shift={(253.4,53.95)}, rotate = 179.83] [color={rgb, 255:red, 0; green, 0; blue, 0 }  ][line width=0.75]    (10.93,-3.29) .. controls (6.95,-1.4) and (3.31,-0.3) .. (0,0) .. controls (3.31,0.3) and (6.95,1.4) .. (10.93,3.29)   ;
\draw [shift={(137.3,54.3)}, rotate = 359.83] [color={rgb, 255:red, 0; green, 0; blue, 0 }  ][line width=0.75]    (10.93,-3.29) .. controls (6.95,-1.4) and (3.31,-0.3) .. (0,0) .. controls (3.31,0.3) and (6.95,1.4) .. (10.93,3.29)   ;
%Shape: Circle [id:dp9424786620706112] 
\draw  [draw opacity=0][fill={rgb, 255:red, 208; green, 2; blue, 27 }  ,fill opacity=1 ] (132.4,53.95) .. controls (132.4,51.24) and (134.59,49.05) .. (137.3,49.05) .. controls (140.01,49.05) and (142.2,51.24) .. (142.2,53.95) .. controls (142.2,56.66) and (140.01,58.85) .. (137.3,58.85) .. controls (134.59,58.85) and (132.4,56.66) .. (132.4,53.95) -- cycle ;
%Straight Lines [id:da5477016616487851] 
\draw    (257.7,53.89) -- (369.8,53.56) ;
\draw [shift={(371.8,53.55)}, rotate = 179.83] [color={rgb, 255:red, 0; green, 0; blue, 0 }  ][line width=0.75]    (10.93,-3.29) .. controls (6.95,-1.4) and (3.31,-0.3) .. (0,0) .. controls (3.31,0.3) and (6.95,1.4) .. (10.93,3.29)   ;
\draw [shift={(255.7,53.9)}, rotate = 359.83] [color={rgb, 255:red, 0; green, 0; blue, 0 }  ][line width=0.75]    (10.93,-3.29) .. controls (6.95,-1.4) and (3.31,-0.3) .. (0,0) .. controls (3.31,0.3) and (6.95,1.4) .. (10.93,3.29)   ;
%Shape: Circle [id:dp7734286615017774] 
\draw  [draw opacity=0][fill={rgb, 255:red, 208; green, 2; blue, 27 }  ,fill opacity=1 ] (250.8,53.55) .. controls (250.8,50.84) and (252.99,48.65) .. (255.7,48.65) .. controls (258.41,48.65) and (260.6,50.84) .. (260.6,53.55) .. controls (260.6,56.26) and (258.41,58.45) .. (255.7,58.45) .. controls (252.99,58.45) and (250.8,56.26) .. (250.8,53.55) -- cycle ;
%Straight Lines [id:da9530909026179883] 
\draw    (375.3,53.89) -- (487.4,53.56) ;
\draw [shift={(489.4,53.55)}, rotate = 179.83] [color={rgb, 255:red, 0; green, 0; blue, 0 }  ][line width=0.75]    (10.93,-3.29) .. controls (6.95,-1.4) and (3.31,-0.3) .. (0,0) .. controls (3.31,0.3) and (6.95,1.4) .. (10.93,3.29)   ;
\draw [shift={(373.3,53.9)}, rotate = 359.83] [color={rgb, 255:red, 0; green, 0; blue, 0 }  ][line width=0.75]    (10.93,-3.29) .. controls (6.95,-1.4) and (3.31,-0.3) .. (0,0) .. controls (3.31,0.3) and (6.95,1.4) .. (10.93,3.29)   ;
%Shape: Circle [id:dp3420221597138213] 
\draw  [draw opacity=0][fill={rgb, 255:red, 208; green, 2; blue, 27 }  ,fill opacity=1 ] (368.4,53.55) .. controls (368.4,50.84) and (370.59,48.65) .. (373.3,48.65) .. controls (376.01,48.65) and (378.2,50.84) .. (378.2,53.55) .. controls (378.2,56.26) and (376.01,58.45) .. (373.3,58.45) .. controls (370.59,58.45) and (368.4,56.26) .. (368.4,53.55) -- cycle ;
%Shape: Circle [id:dp833809294077489] 
\draw  [draw opacity=0][fill={rgb, 255:red, 208; green, 2; blue, 27 }  ,fill opacity=1 ] (485.2,53.95) .. controls (485.2,51.24) and (487.39,49.05) .. (490.1,49.05) .. controls (492.81,49.05) and (495,51.24) .. (495,53.95) .. controls (495,56.66) and (492.81,58.85) .. (490.1,58.85) .. controls (487.39,58.85) and (485.2,56.66) .. (485.2,53.95) -- cycle ;
%Shape: Rectangle [id:dp5945489574257963] 
\draw   (80.33,6) -- (405,6) -- (405,110.67) -- (80.33,110.67) -- cycle ;
%Straight Lines [id:da09964890246856739] 
\draw [color={rgb, 255:red, 65; green, 117; blue, 5 }  ,draw opacity=1 ][line width=1.5]    (262,79) -- (487.33,79) ;
\draw [shift={(490.33,79)}, rotate = 180] [color={rgb, 255:red, 65; green, 117; blue, 5 }  ,draw opacity=1 ][line width=1.5]    (14.21,-4.28) .. controls (9.04,-1.82) and (4.3,-0.39) .. (0,0) .. controls (4.3,0.39) and (9.04,1.82) .. (14.21,4.28)   ;
\draw [shift={(259,79)}, rotate = 0] [color={rgb, 255:red, 65; green, 117; blue, 5 }  ,draw opacity=1 ][line width=1.5]    (14.21,-4.28) .. controls (9.04,-1.82) and (4.3,-0.39) .. (0,0) .. controls (4.3,0.39) and (9.04,1.82) .. (14.21,4.28)   ;

% Text Node
\draw (103.2,30.4) node [anchor=north west][inner sep=0.75pt]  [font=\footnotesize]  {$\Sigma _{\theta }^{\mathrm{k}}( N,\mathcal{NN})$};
% Text Node
\draw (218.6,30) node [anchor=north west][inner sep=0.75pt]  [font=\footnotesize]  {$\Sigma _{\theta ^{\star }}^{\mathrm{k}}( N,\mathcal{NN})$};
% Text Node
\draw (348.2,30) node [anchor=north west][inner sep=0.75pt]  [font=\footnotesize]  {$\Sigma ^{\mathrm{k}}(\mathcal{NN})$};
% Text Node
\draw (480,28) node [anchor=north west][inner sep=0.75pt]  [font=\footnotesize]  {$\Sigma \mathrm{^{k}}$};
% Text Node
\draw (85.14,9.72) node [anchor=north west][inner sep=0.75pt]  [font=\footnotesize] [align=left] {{\fontfamily{pcr}\selectfont Neural net}};
% Text Node
\draw (194.44,65.33) node  [font=\scriptsize] [align=left] {\begin{minipage}[lt]{57.89pt}\setlength\topsep{0pt}
\begin{center}
optimization error
\end{center}

\end{minipage}};
% Text Node
\draw (316.44,65.33) node  [font=\scriptsize] [align=left] {\begin{minipage}[lt]{52.33pt}\setlength\topsep{0pt}
\begin{center}
estimation error
\end{center}

\end{minipage}};
% Text Node
\draw (441.11,65.33) node  [font=\scriptsize] [align=left] {\begin{minipage}[lt]{64.63pt}\setlength\topsep{0pt}
\begin{center}
approximation error
\end{center}

\end{minipage}};
% Text Node
\draw (370.44,91.33) node  [font=\scriptsize,color={rgb, 255:red, 65; green, 117; blue, 5 }  ,opacity=1 ] [align=left] {\begin{minipage}[lt]{64.24pt}\setlength\topsep{0pt}
\begin{center}
generalization error
\end{center}

\end{minipage}};

\end{tikzpicture}}
\caption{Illustration of the total error for TPINNs. In \Cref{thm:boundGeneralization} we provide a bound for the generalization error for V-TPINNs.}
\label{fig:ConvergenceBounds}
\end{figure}
%One introduces:
%$$\hat{\Sigma}^\tk: = \Sigma^\tk_{\star} - \Sigma^\tk.$$
%There holds that for any $\bx \in D^{(\tk)}$:   
%\be
%\fA^{(\tk)} \hat{\Sigma}^\tk = \bxi_\star,
%\ee 
%with $\bxi_\star:=\bxi_\theta^\star$ the residual in \eqref{eq:residualVTPINNs} for $\theta = \theta^\star$. 

We are ready to state a fundamental result of this work, namely a bound for the generalization error of V-TPINNs.

\begin{theorem}[Generalization error for V-TPINNs]\label{thm:boundGeneralization}Under the present setting, the generalization error in \eqref{eq:genErrorDef} is bounded as:
\be\label{eq:boundGeneralization}
\eps_G\leq  \gamma_\fL^{-\tk} \eps_T + \gamma_\fL^{-\tk} c_\textup{quad}^{1/2}  N^{- \alpha / 2},
\ee 
with $\gamma_\fL$ the stability constant in \eqref{eq:continuousdep1} and $c_\textup{quad}$ in \eqref{eq:boundQuadrature}.
\end{theorem}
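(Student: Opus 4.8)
The plan is to chain the tensorized stability estimate \eqref{eq:continuousdep2} with the quadrature assumption \eqref{eq:boundQuadrature}, following the proceeding of \cite{Mishra2020EstimatesOT} but now at the level of the $\tk$-fold operator $\fA^{(\tk)}$.

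First I would apply \eqref{eq:continuousdep2} with $\tU = \Sigma^\tk_\star$ and $\tV = \Sigma^\tk$. Since $\fA^{(\tk)}\Sigma^\tk = \tC^\tk$ by \eqref{eq:opeq} and $\bxi_\star = \fA^{(\tk)}\Sigma^\tk_\star - \tC^\tk$ is the residual \eqref{eq:residualVTPINNs} evaluated at $\theta^\star$, this gives
\be
\eps_G = \|\Sigma^\tk_\star - \Sigma^\tk\|_{X^{(\tk)}} \leq \gamma_\fL^{-\tk}\,\big\|\fA^{(\tk)}\Sigma^\tk_\star - \fA^{(\tk)}\Sigma^\tk\big\|_{Y^{(\tk)}} = \gamma_\fL^{-\tk}\,\|\bxi_\star\|_{Y^{(\tk)}}.
\ee
The whole problem thus reduces to bounding the $Y^{(\tk)}$-norm of the optimal residual by the computable training error plus a vanishing quadrature remainder.

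Next I would invoke the quadrature bound \eqref{eq:boundQuadrature} with $\tG := \bxi_\star^2$: its exact integral over $D^{(\tk)}$ is $\hat{\tG} = \|\bxi_\star\|_{Y^{(\tk)}}^2$, while its quadrature approximation with the weights and collocation points of \Cref{sec:Convergence} is precisely $\hat{\tG}_N = \eps_T^2$ from \eqref{eq:TrainingError}. Using the estimate in the one-sided direction $\hat{\tG} \leq \hat{\tG}_N + |\hat{\tG}_N - \hat{\tG}|$ yields $\|\bxi_\star\|_{Y^{(\tk)}}^2 \leq \eps_T^2 + c_\textup{quad}\, N^{-\alpha}$, where (as in the statement) $c_\textup{quad}$ abbreviates $c_\textup{quad}(\|\tG\|_{Y^{(\tk)}})$. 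Taking square roots and using subadditivity $\sqrt{a+b}\leq \sqrt{a}+\sqrt{b}$ gives $\|\bxi_\star\|_{Y^{(\tk)}} \leq \eps_T + c_\textup{quad}^{1/2} N^{-\alpha/2}$, and substitution into the previous display produces exactly \eqref{eq:boundGeneralization}.

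I expect the only delicate point to be the bookkeeping around the training error, since \eqref{eq:TrainingError} only asserts $\eps_T^2 \approx \|\bxi_\star\|_{Y^{(\tk)}}^2$ approximately; the clean inequality requires applying the quadrature estimate in the direction above and tracking which constant is which. Beyond that there is no genuine obstacle: the tensor stability constant $\gamma_\fL^{-\tk}$ was already secured in \eqref{eq:continuousdep2} by applying the statistical moments to the linearization \eqref{eq:linearization}, so the resulting bound is robust in $\tk$ and inherits the dimension-independence of Monte-Carlo ($\alpha = 1/2$) or Quasi-Monte-Carlo quadrature.
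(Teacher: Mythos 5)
Your proposal is correct and follows essentially the same route as the paper's proof: the tensor stability bound \eqref{eq:continuousdep2} applied to $\tU=\Sigma^\tk_\star$, $\tV=\Sigma^\tk$, followed by the quadrature estimate \eqref{eq:boundQuadrature} applied to the squared residual, then square roots via subadditivity. In fact your bookkeeping of squares (identifying $\hat{\tG}_N=\eps_T^2$ and $\hat{\tG}=\|\bxi_\star\|_{Y^{(\tk)}}^2$ before taking roots) is cleaner than the paper's own write-up, which drops the squares on $\|\bxi_\star\|_{Y^{(\tk)}}$ and $\eps_T$ in intermediate lines.
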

\begin{proof}Under the present setting, remark that:
\begin{align*} 
\eps_G^2 &= \|\Sigma^\tk_\star - \Sigma^\tk\|_{X^{(\tk)}}^2 \quad \text{by definition \eqref{eq:genErrorDef},}\\
&\leq \gamma_\fL^{-2\tk} \|\fA^{(\tk)} \Sigma_\star^\tk - \fA^{(\tk)}\Sigma^\tk\|^2_{L^2(D)^{(\tk)}}\quad \text{by application of \eqref{eq:continuousdep2} to } \Sigma_\star=\tU , \Sigma^\tk =\tV, \\
& = \gamma_\fL^{-2\tk} \| \bxi_\star \|_{Y^{(\tk)}} \quad \text{by \eqref{eq:residualVTPINNs},}\\
&\leq  \gamma_\fL^{-2\tk} \left(\rev{\eps_T^2} + c_\text{quad} N^{-\alpha} \right)\quad \text{by \eqref{eq:boundQuadrature},}
\end{align*}
yielding the desired result.
\end{proof}
\Cref{thm:boundGeneralization} states that the generalization is bounded by: (i) one term depending on the training error (tractable) and (ii) another one depending on the number of collocation points. This shows that for enough collocation points, one has a controlled generalization error depending on the training error modulo the stability constant for the problem under consideration. Also, this result proves that the novel V-TPINNs has the same capabilities as PINNs, which will be verified in \Cref{sec:NumExp}.

\section{Implementation}\label{sec:Implementation}
We detail the implementation of TPINNs in DeepXDE \cite{lu2021deepxde}. Readers are referred to the preceding reference for the steps of general PINNs implementation. In Listing \ref{lst:tensor}, we detail the code that was added to handle random sampling for tensor points $\bx \in D^{(\tk)}$. These few lines allows to sample random points in $D^{(\tk)}$. Notice than one could handle soft BCs similarly by adding a method for random boundary points.
~\\
\begin{lstlisting}[language=Python, caption=Tensor Geometry class in DeepXDE that allows to sample random points in $D^{(\tk)}$., label=lst:tensor]
class GeometryXGeometry:
    def __init__(self, geometry, k):
        self.geometry = geometry
        self.dim = geometry.dim ** k
        self.k = k
    
    def random_points(self, n, random="pseudo"):
        xx = []
        for i in range(self.k):
            xx.append(self.geometry.random_points(n, random=random))
        return np.hstack(xx)
\end{lstlisting}
For the sake of clarity, we revisit the flowchart of PINNs according to DeepXDE \cite[Figure 5]{lu2021deepxde} in \Cref{fig:flowchartTPINNs}. We remark that TPINNs can be applied with minor changes to existing code (namely, the implementation of a sampler), and without having to access to the solver core. Specific loss and BCs are enforced via the main code.

\tikzset{every picture/.style={line width=0.75pt}} %set default line width to 0.75pt        
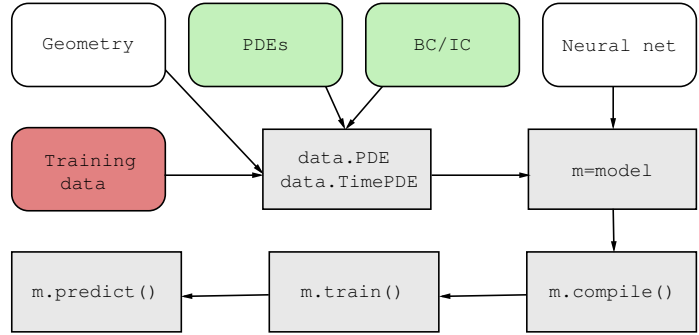
\begin{figure}[htb!]
\centering
\resizebox{9.5cm}{!} {
\tikzset{every picture/.style={line width=0.75pt}} %set default line width to 0.75pt        
\begin{tikzpicture}[x=0.75pt,y=0.75pt,yscale=-1,xscale=1]
%uncomment if require: \path (0,300); %set diagram left start at 0, and has height of 300

%Rounded Rect [id:dp7270130625156528] 
\draw   (106,52.4) .. controls (106,46.66) and (110.66,42) .. (116.4,42) -- (191.6,42) .. controls (197.34,42) and (202,46.66) .. (202,52.4) -- (202,83.6) .. controls (202,89.34) and (197.34,94) .. (191.6,94) -- (116.4,94) .. controls (110.66,94) and (106,89.34) .. (106,83.6) -- cycle ;
%Rounded Rect [id:dp9483666675726832] 
\draw  [fill={rgb, 255:red, 195; green, 241; blue, 187 }  ,fill opacity=1 ] (216.67,52.4) .. controls (216.67,46.66) and (221.33,42) .. (227.07,42) -- (302.27,42) .. controls (308.01,42) and (312.67,46.66) .. (312.67,52.4) -- (312.67,83.6) .. controls (312.67,89.34) and (308.01,94) .. (302.27,94) -- (227.07,94) .. controls (221.33,94) and (216.67,89.34) .. (216.67,83.6) -- cycle ;
%Rounded Rect [id:dp0029735900887679634] 
\draw  [fill={rgb, 255:red, 195; green, 241; blue, 187 }  ,fill opacity=1 ] (327.34,52.4) .. controls (327.34,46.66) and (332,42) .. (337.74,42) -- (412.94,42) .. controls (418.68,42) and (423.34,46.66) .. (423.34,52.4) -- (423.34,83.6) .. controls (423.34,89.34) and (418.68,94) .. (412.94,94) -- (337.74,94) .. controls (332,94) and (327.34,89.34) .. (327.34,83.6) -- cycle ;
%Rounded Rect [id:dp038361785861767084] 
\draw   (438,52.4) .. controls (438,46.66) and (442.66,42) .. (448.4,42) -- (523.6,42) .. controls (529.34,42) and (534,46.66) .. (534,52.4) -- (534,83.6) .. controls (534,89.34) and (529.34,94) .. (523.6,94) -- (448.4,94) .. controls (442.66,94) and (438,89.34) .. (438,83.6) -- cycle ;
%Rounded Rect [id:dp44735649713167924] 
\draw  [fill={rgb, 255:red, 226; green, 129; blue, 129 }  ,fill opacity=1 ] (106,130.15) .. controls (106,124.41) and (110.66,119.75) .. (116.4,119.75) -- (191.6,119.75) .. controls (197.34,119.75) and (202,124.41) .. (202,130.15) -- (202,161.35) .. controls (202,167.09) and (197.34,171.75) .. (191.6,171.75) -- (116.4,171.75) .. controls (110.66,171.75) and (106,167.09) .. (106,161.35) -- cycle ;
%Shape: Rectangle [id:dp950065737474864] 
\draw  [fill={rgb, 255:red, 155; green, 155; blue, 155 }  ,fill opacity=0.23 ] (263,120.75) -- (368,120.75) -- (368,170.75) -- (263,170.75) -- cycle ;
%Shape: Rectangle [id:dp5767558119169172] 
\draw  [fill={rgb, 255:red, 155; green, 155; blue, 155 }  ,fill opacity=0.23 ] (429,120.75) -- (534,120.75) -- (534,170.75) -- (429,170.75) -- cycle ;
%Shape: Rectangle [id:dp9925009635688948] 
\draw  [fill={rgb, 255:red, 155; green, 155; blue, 155 }  ,fill opacity=0.23 ] (428,197.5) -- (534,197.5) -- (534,247.5) -- (428,247.5) -- cycle ;
%Shape: Rectangle [id:dp9716389943848798] 
\draw  [fill={rgb, 255:red, 155; green, 155; blue, 155 }  ,fill opacity=0.23 ] (267,197.5) -- (373,197.5) -- (373,247.5) -- (267,247.5) -- cycle ;
%Shape: Rectangle [id:dp15878079745856377] 
\draw  [fill={rgb, 255:red, 155; green, 155; blue, 155 }  ,fill opacity=0.23 ] (106,197.5) -- (212,197.5) -- (212,247.5) -- (106,247.5) -- cycle ;
%Straight Lines [id:da054918277818507555] 
\draw    (202,83.6) -- (262.13,147.54) ;
\draw [shift={(263.5,149)}, rotate = 226.76] [fill={rgb, 255:red, 0; green, 0; blue, 0 }  ][line width=0.08]  [draw opacity=0] (7.2,-1.8) -- (0,0) -- (7.2,1.8) -- cycle    ;
%Straight Lines [id:da2650476199463039] 
\draw    (201.5,149.5) -- (261.5,149.5) ;
\draw [shift={(263.5,149.5)}, rotate = 180] [fill={rgb, 255:red, 0; green, 0; blue, 0 }  ][line width=0.08]  [draw opacity=0] (7.2,-1.8) -- (0,0) -- (7.2,1.8) -- cycle    ;
%Straight Lines [id:da883785333126949] 
\draw    (302.27,94) -- (313.67,119.18) ;
\draw [shift={(314.5,121)}, rotate = 245.63] [fill={rgb, 255:red, 0; green, 0; blue, 0 }  ][line width=0.08]  [draw opacity=0] (7.2,-1.8) -- (0,0) -- (7.2,1.8) -- cycle    ;
%Straight Lines [id:da5259436613996507] 
\draw    (337.74,94) -- (315.8,119.48) ;
\draw [shift={(314.5,121)}, rotate = 310.72] [fill={rgb, 255:red, 0; green, 0; blue, 0 }  ][line width=0.08]  [draw opacity=0] (7.2,-1.8) -- (0,0) -- (7.2,1.8) -- cycle    ;
%Straight Lines [id:da008508726914121079] 
\draw    (368.5,149.5) -- (428.5,149.5) ;
\draw    (482,94) -- (482,119) ;
\draw [shift={(482,121)}, rotate = 270] [fill={rgb, 255:red, 0; green, 0; blue, 0 }  ][line width=0.08]  [draw opacity=0] (7.2,-1.8) -- (0,0) -- (7.2,1.8) -- cycle    ;
\draw [shift={(430.5,149.5)}, rotate = 180] [fill={rgb, 255:red, 0; green, 0; blue, 0 }  ][line width=0.08]  [draw opacity=0] (7.2,-1.8) -- (0,0) -- (7.2,1.8) -- cycle    ;
%Straight Lines [id:da6651426021683557] 
\draw    (482,170.5) -- (482.46,196) ;
\draw [shift={(482.5,198)}, rotate = 268.96] [fill={rgb, 255:red, 0; green, 0; blue, 0 }  ][line width=0.08]  [draw opacity=0] (7.2,-1.8) -- (0,0) -- (7.2,1.8) -- cycle    ;
%Straight Lines [id:da5560859299615539] 
\draw    (428,224.33) -- (375,225.3) ;
\draw [shift={(373,225.33)}, rotate = 358.96] [fill={rgb, 255:red, 0; green, 0; blue, 0 }  ][line width=0.08]  [draw opacity=0] (7.2,-1.8) -- (0,0) -- (7.2,1.8) -- cycle    ;
%Straight Lines [id:da13281254773659357] 
\draw    (266.67,224.33) -- (213.67,225.3) ;
\draw [shift={(211.67,225.33)}, rotate = 358.96] [fill={rgb, 255:red, 0; green, 0; blue, 0 }  ][line width=0.08]  [draw opacity=0] (7.2,-1.8) -- (0,0) -- (7.2,1.8) -- cycle    ;

% Text Node
\draw (473.66,142.75) node  [font=\small] [align=left] {\begin{minipage}[lt]{25.88pt}\setlength\topsep{0pt}
\begin{center}{\fontfamily{pcr}\selectfont m=model}
\end{center}

\end{minipage}};
% Text Node
\draw (313.73,145.75) node  [font=\small] [align=left] {\begin{minipage}[lt]{59.93pt}\setlength\topsep{0pt}
\begin{center}
{\fontfamily{pcr}\selectfont data.PDE}\\{\fontfamily{pcr}\selectfont data.TimePDE}
\end{center}

\end{minipage}};
% Text Node
\draw (319.22,222.5) node  [font=\small] [align=left] {\begin{minipage}[lt]{53.37pt}\setlength\topsep{0pt}
\begin{center}
{\fontfamily{pcr}\selectfont m.train()}
\end{center}

\end{minipage}};
% Text Node
\draw (158.44,222.5) node  [font=\small] [align=left] {\begin{minipage}[lt]{61.6pt}\setlength\topsep{0pt}
\begin{center}
{\fontfamily{pcr}\selectfont m.predict()}
\end{center}

\end{minipage}};
% Text Node
\draw (481.66,222.5) node  [font=\small] [align=left] {\begin{minipage}[lt]{64.74pt}\setlength\topsep{0pt}
\begin{center}
{\fontfamily{pcr}\selectfont m.compile()}
\end{center}

\end{minipage}};
% Text Node
\draw (153.84,68) node  [font=\small] [align=left] {{\fontfamily{pcr}\selectfont Geometry}};
% Text Node
\draw (264.89,68) node  [font=\small] [align=left] {\begin{minipage}[lt]{47.44pt}\setlength\topsep{0pt}
\begin{center}
{\fontfamily{pcr}\selectfont PDEs}
\end{center}

\end{minipage}};
% Text Node
\draw (375.43,68) node  [font=\small] [align=left] {{\fontfamily{pcr}\selectfont BC/IC}};
% Text Node
\draw (486.11,68) node  [font=\small] [align=left] {{\fontfamily{pcr}\selectfont Neural net}};
% Text Node
\draw (150.71,143.75) node  [font=\small] [align=center] {\begin{minipage}[lt]{36.64pt}\setlength\topsep{0pt}
\begin{center}
{\fontfamily{pcr}\selectfont Training}\\{\fontfamily{pcr}\selectfont data}
\end{center}

\end{minipage}};
\end{tikzpicture}
}
\caption{Flowchart of DeepXDE in \cite[Figure 5]{lu2021deepxde} showing that TPINNs can be added to PINNs code with minor changes. The additions to the source code are in red. Boxes in green are adapted to the tensor setting in the main code via proper definition of the loss function and hard constraints.}
\label{fig:flowchartTPINNs}
\end{figure}

\section{Numerical experiments}\label{sec:NumExp}In this section, we apply TPINNs to several operator equations. 
\subsection{Methodology}\label{subsec:methodology}Throughout, we perform simulations with DeepXDE 1.7.1\footnote{\url{https://deepxde.readthedocs.io/}} \cite{lu2021deepxde} in single float precision on a AMAX DL-E48A AMD Rome EPYC server with 8 Quadro RTX 8000 Nvidia GPUs---each one with a 48 GB memory. We use TensorFlow 2.5.0 \cite{tensorflow2015-whitepaper} as a backend. We disable XLA\footnote{\url{https://www.tensorflow.org/xla}} and sample the collocation points randomly. All the results are available on GitHub\footnote{\url{https://github.com/pescap/TensorPINNs}} and fully reproducible, with a global random seed of $1234$. We use Glorot uniform initialization \cite[Chapter 8]{bengio2017deep}. ``Error'' refers to the $L^2$-relative error, and ``Time'' (in seconds) stands for the training time.

For MO-TPINNs, we set the weights in \eqref{eq:LossMO} as:
$$
\upomega_1 = 1\quad \upomega_j = 1000, \quad j= 1,\cdots ,\tk.
$$

For each case, we compare both V-TPINNs and MO-TPINNs. Each subsection is intended to bring a novelty concerning solving operator equations with stochastic data:
\begin{itemize}\setlength{\itemsep}{0pt}
\item \Cref{subsec:Poisson} - 1D Poisson: Higher order moments $\tk= \{1, \cdots 4\}$;
\item \Cref{subsec:StationarySchrodinger} - 1D Schrödinger: Non-linear operator;
\item \Cref{subsec:Helmholtz} - 2D Helmholtz: Two-dimensional problem with oscillatory behavior;
\item \Cref{subsec:Heat} - 1D Heat: Time-domain operator.
\end{itemize}

For the sake of simplicity, we summarize the parameters and hyper-parameters for TPINNs for each case in \Cref{tab:OverviewHP}.
\begin{table}[h!t]
\renewcommand\arraystretch{1.5}
\begin{center}
\footnotesize
%\resizebox{12.5cm}{!} {
\begin{tabular}{
    |>{\centering\arraybackslash}m{2.5cm}% instead of "p" is "m"
    |>{\centering\arraybackslash}m{1.1cm}
    |>{\centering\arraybackslash}m{0.8cm}
    |>{\centering\arraybackslash}m{0.8cm}
    |>{\centering\arraybackslash}m{1.1cm}
    |>{\centering\arraybackslash}m{1.1cm}
    |>{\centering\arraybackslash}m{1.1cm}
    |>{\centering\arraybackslash}m{.8cm}|
    }
    \hline
\multirow{2}{*}{Case}  & learning rate & width & depth& \multirow{2}{*}{epochs} & \multirow{2}{*}{$N$} & \multirow{2}{*}{$N^\text{test} $}  & \multirow{2}{*}{$\sigma$} \\ % \cline{2-6}
  &$l_r$&   $\mN$ &$L-1$ &  & & &\\ \hline \hline
 1D Poisson& $10^{-3*}$ & $50$ & $4$  &  $10000$ & $4000$ & $1000$ & $\tanh$ \\ \hline
 1D Schrödinger & $10^{-3}$ & $50$ & $4$ & $10000$ & $4000$ & $1000$ & $\tanh$ \\\hline
 2D Helmholtz & $10^{-2}$ & $350$ & $2$ & $10000^{**}$ & $5000$ & $5000$ & $\sin$ \\ \hline 
 1D Heat & $10^{-3}$ & $20$ & $3$ & $15000$ & $4000$ & $4000$ & $\tanh$\\ \hline
 \end{tabular}%}
\end{center}
\caption{Overview of the parameters and hyper-parameters for each case; *$10^{-4}$ for MO-TPINNs; **$15000$ for covariance kernels in \Cref{subsubsec:HelmholtzExp}.}
\label{tab:OverviewHP} 
\end{table}

\subsection{Poisson 1D}\label{subsec:Poisson}To begin with, we consider the Laplace equation in $D := [0,1]$. For $\tk \in \IN_1$, we seek $u \in L^\tk (\Omega,\IP, X)$ such that
$$
- \Delta u (\omega) = f_D(w) \quad \text{and} \quad u(\omega)|_\Gamma = 0\quad \IP\text{-a.e. }\omega \in \Omega
$$
with 
$$
f_D(\omega) = x e^{\mu (\omega)} ,\quad \overline{f}_D:=x, \quad \omega \in \Omega, \quad \mu > 0.
$$
This problem can be cast as \eqref{eq:stochastic} with $\fA :X=H^1(D)\to Y=L^2(D)$ \cite[Remark 3.2]{Mishra2021Inverse}. Acknowledge that for any $\omega \in \Omega$:
\be\label{eq:poisson_manufactured}
u(\omega) = \overline{u}e^{\mu(\omega)}, \quad \overline{u}(x) := -\frac{1}{6}x (x^2-1).
\ee 
As a consequence, there holds that:
\be 
\label{eq:OperatorLaplace}
\fA_D^{(\tk)} \mM^k[u] = \mM^k[f_D] = c_\sigma\overline{f}_D^{(\tk)}\quad \text{with} \quad c_\sigma : =  \mM^k[e^{\mu(\omega)}] = e^{\frac{1}{2}\tk^2\sigma^2},
\ee
yielding
$$
\mM^k[u] = c_\sigma\overline{u}^{(\tk)}.
$$
Notice that $\tk=1$ simplifies to $\fA_D\overline{u} = \overline{f}_D$. We enforce the Dirichlet BCs throughout the transformation:
\be\label{eq:transfoPoisson}
\hat{\Sigma}^{\tk} = \left(\prod_{i=1}^\tk x_i (x_i^2-1)\right) \Sigma^{\tk},
\ee
and solve \eqref{eq:OperatorLaplace} for $\tk=1,\cdots,4$ and $\sigma = 1$ via TPINNs. In \Cref{fig:sol1} we represent $\text{diag}(\Sigma^\tk_\theta)$. In the same fashion, we represent the pointwise residual error in \Cref{fig:err1}.
\begin{figure}[htb!]
\center
\includegraphics[width=1\textwidth]{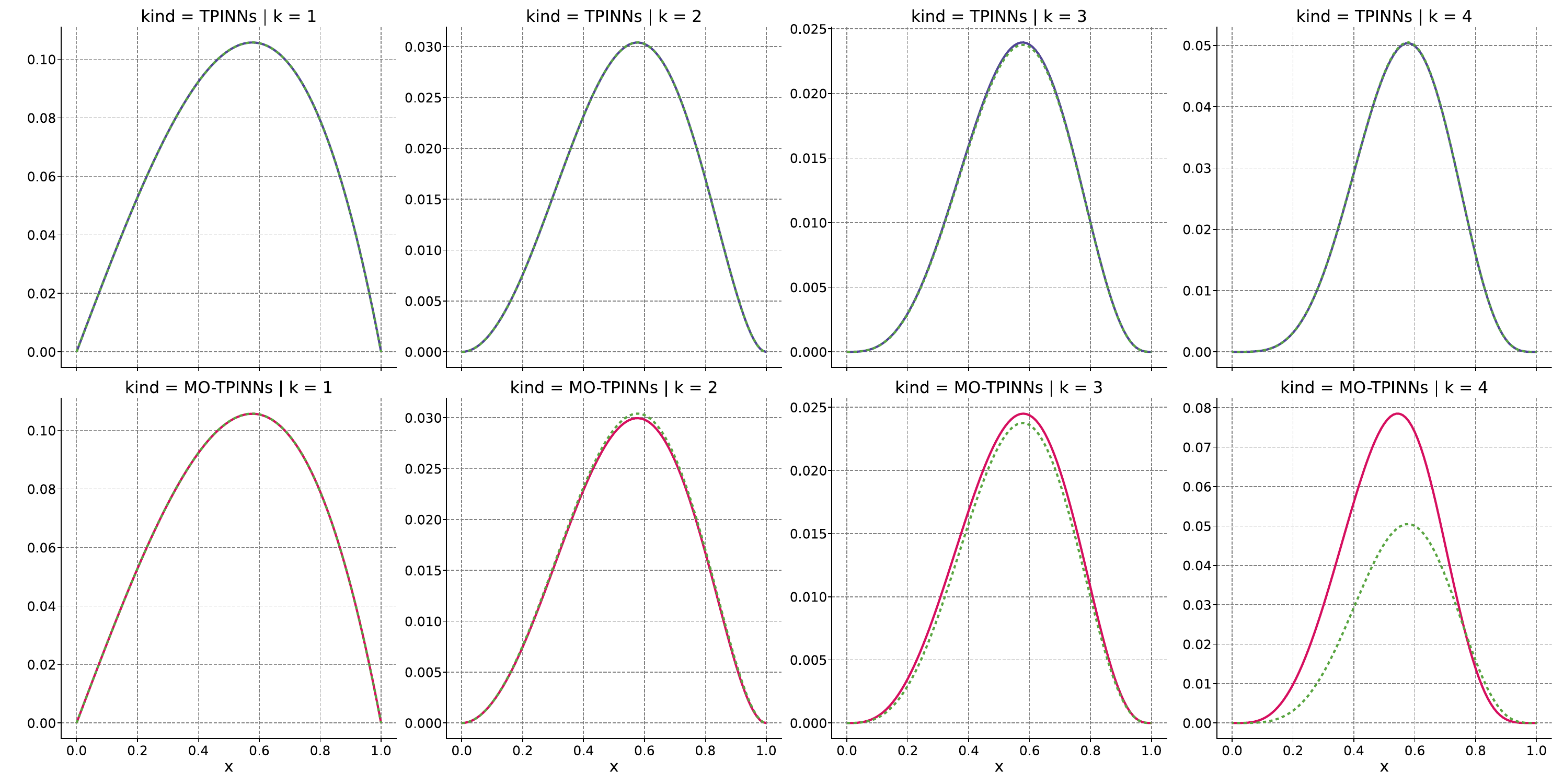}
\caption{Poisson 1D. $\text{diag}(\Sigma^\tk_\theta)$ for TPINNs (up, purple) and MO-TPINNs (bottom, pink) for $\tk=1,\cdots,4$. The exact solution is plotted with dashed green lines.}
\label{fig:sol1}       
\end{figure}

\begin{figure}[htb!]
\center
\includegraphics[width=1\textwidth]{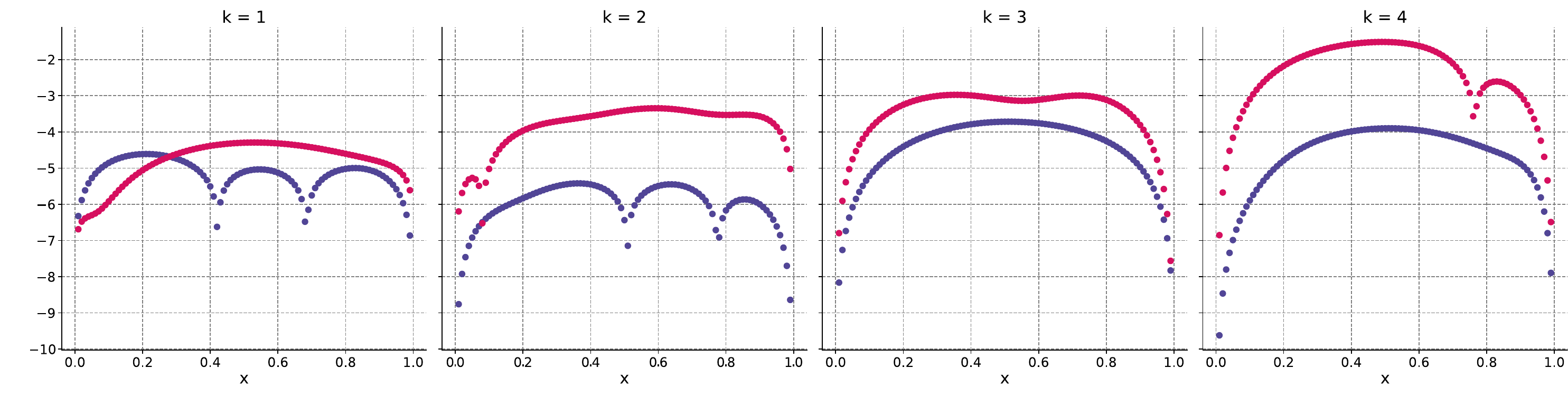}
\caption{Poisson 1D. Pointwise error (log-scale) for $\text{diag}(\Sigma^\tk_\theta)$ for TPINNs (purple) and MO-TPINNs (pink) for $\tk=1,\cdots,4$.}
\label{fig:err1}       
\end{figure}

In \Cref{tab:OverviewPoisson1D}, we \rev{summarize} the results for TPINNs. We remark that the training and test losses are increasing with $\tk$. Surprisingly, the error is low and remains stable with $\tk$. TPINNs provide a precise approximation with $N=4000$ training points and almost identical neural networks (\rev{see} $|\Theta|$). \rev{However, as} expected, the curse of dimensionality stems from higher\rev{-}order derivatives, inducing a strong increase in training times, e.g.,~$5246.1$s., i.e.~$1$ hour and $27$ minutes for $\tk=4$. According to these results, TPINNs prove \rev{to be} an efficient and robust technique \rev{for} approximat\rev{ing} the statistical moments. TPINNs deliver a surprisingly precise approximation to the tensor equation.

\begin{table}[ht!]
\renewcommand\arraystretch{1.5}
\begin{center}
\footnotesize
%\resizebox{8.5cm}{!} {
\begin{tabular}{
    |>{\centering\arraybackslash}m{.5cm}% instead of "p" is "m"
    ||>{\centering\arraybackslash}m{2cm}
    |>{\centering\arraybackslash}m{2cm}
    |>{\centering\arraybackslash}m{2cm}
    ||>{\centering\arraybackslash}m{2cm}
    |>{\centering\arraybackslash}m{2cm}|
    }
    \hline
$\tk$ & $\mL_\theta $ & $\mL_\theta^\text{test}$ & Error & Time (s) & $|\Theta|$ \\ \hline\hline
1 & $2.41\times 10^{-8}$ &$2.30\times 10^{-8}$ & $2.00\times 10^{-5}$ & $30.8$ & $7801$\\\hline %ok
2 & $1.46\times 10^{-5}$ &$1.73\times 10^{-5}$ & $1.49\times 10^{-4}$ & $112.0$ & $7851$\\\hline %ok
3 & $1.03\times 10^{-3}$ &$1.40\times 10^{-3}$ & $6.07\times 10^{-4}$ & $644.4$ & $7901$\\\hline %ok
4 & $2.07$ &$3.37\times 10^{2}$ & $6.15\times 10^{-4}$ & $5246.1$ & $7951$\\\hline %ok
 \end{tabular}%}
\end{center}
\caption{Poisson 1D. Overview of the results for V-TPINNs.}
 \label{tab:OverviewPoisson1D} 
\end{table}
In the same fashion, we represent the results for MO-TPINNs in \Cref{tab:OverviewPoisson1DMO}. Here, the method shows controlled computational cost, at the expense of adding more terms to the function loss, and learning extra variables. We observe that training times remain stable with $\tk$. However, acknowledge that losses and errors grow with $\tk$. For $\tk=4$, MO-TPINNs yield a relative error of $23,4\%$. Notice in \eqref{fig:sol1} that MO-TPINNs fail at representing accurately the diagonal terms. 
\begin{table}[ht!]
\renewcommand\arraystretch{1.5}
\begin{center}
\footnotesize
%\resizebox{8.5cm}{!} {
\begin{tabular}{
    |>{\centering\arraybackslash}m{.5cm}% instead of "p" is "m"
    ||>{\centering\arraybackslash}m{2cm}
    |>{\centering\arraybackslash}m{2cm}
    |>{\centering\arraybackslash}m{2cm}
    ||>{\centering\arraybackslash}m{2cm}
    |>{\centering\arraybackslash}m{2cm}|
    }
    \hline
$\tk$ & $\mL_\theta $ & $\mL_\theta^\text{test}$ & Error & Time (s) & $|\Theta|$ \\ \hline\hline
1 & $6.21\times 10^{-7}$ &$6.26\times 10^{-7}$ & $2.03\times 10^{-4}$ & $36.6$ & $7801$\\\hline 
2 & $1.36\times 10^{-2}$ &$1.35\times 10^{-2}$ & $1.10\times 10^{-2}$ & $48.9$ & $7902$\\\hline %ok
3 & $1.07\times 10^{-1}$ &$1.32\times 10^{-1}$ & $4.86\times 10^{-2}$ & $67.9$ & $8003$\\\hline %ok 
4 & $2.00\times 10^{3}$ &$6.65\times 10^{4}$ & $2.34\times 10^{-1}$ & $88.5$ & $8104$\\\hline %ok
 \end{tabular}%}
\end{center}
\caption{Poisson 1D. Overview of the results for MO-TPINNs.}
 \label{tab:OverviewPoisson1DMO} 
\end{table}

\subsection{Stationary 1D Schrödinger}\label{subsec:StationarySchrodinger}In this case, we add a cubic non-linear term to \Cref{subsec:Poisson}. For any $\uplambda \neq0$ and $\kappa \in \IR$, we consider the one-dimensional (non-linear) time-harmonic Schrödinger equation \cite{perez2019stationary}:
\be\label{eq:SE}
-\partial^2_{xx} u(\omega) - \uplambda u^3(\omega) - \kappa^2 u(\omega)   = f_D(\omega) \quad\text{for}\quad x\in D=(0,1)\quad \text{with}\quad u(\omega)|_{\Gamma}=0 \quad \IP\text{-a.e.}\in \Omega.
\ee 
We set $\uplambda=1$ and $\kappa=0$, and reuse the manufactured solution $u(\omega)$ as in \eqref{eq:poisson_manufactured}. Thus, \rev{we have} the source term $f_D(\omega)= \overline{f}_D e^{\mu(\omega)}$ with
$$
\overline{f}_D(x) = x - \uplambda \overline{u}^3(x) \quad \text{in} \quad D.
$$
Notice that for $\tk=1$, \eqref{eq:opeq} yields $\fA_D \overline{u} = \overline{f}_D$. For $\tk=2$, we set $S: u \mapsto u^3$. There holds that for $x,y\in D$:
\begin{align*}
(\fA_D \otimes \fA_D ) \Sigma (x,y) & = (-\partial^2_{xx} - \uplambda S^3)\otimes (-\partial^2_{yy} - \uplambda S^3)\Sigma(x,y)\\
& = \partial^2_{xx}\partial^2_{yy} \Sigma(x,y) + \partial^2_{xx}\Sigma^3(x,y)+ (\partial^2_{yy}\Sigma)^3(x,y) + \Sigma^6(x,y).
\end{align*}
We set:
\be\label{eq:SchrExact}
\Sigma_D(x,y) = c_\sigma  v(x) w(y)
\ee 
with $v=w=\overline{u}$, and $c_\sigma$ in \eqref{eq:OperatorLaplace}. We aim at obtaining the manufactured right-hand side corresponding to \eqref{eq:SchrExact}. First, acknowledge that:
\begin{align*}
\partial^2_{xx}v^3(x) & = \partial_x ( 3 \partial_x v(x) v(x) )\\
&=  3  \partial_{xx}^2 v(x) v(x) + 3  (\partial_x v(x))^2\\
& = 3  \left[- x v(x) +  \left(-\frac{x^2}{2} +\frac{1}{6}\right)^2 \right].
\end{align*}
Second, 
$$
\partial^2_{xx}\partial^2_{yy} \Sigma (x,y)  =c_\sigma  xy ,\quad (\partial^2_{yy}\Sigma(x,y))^3  = -c_\sigma^3v(x)^3 y^3\quad \text{and}\quad \Sigma^6(x,y)= c_\sigma^6v(x)^3 w(y)^3.
$$
As a consequence:
\begin{align*}
(\fA_D\otimes \fA_D) \Sigma(x,y) & = c_\sigma x y +   3c_\sigma^3 \left[ - x v(x)+  \left(-\frac{x^2}{2} +\frac{1}{6}\right)^2\right]w(y)^3 - c_\sigma^3v(x)^3 y^3 + c_\sigma^6v(x)^3 w(y)^3 \\
& =:\tC_D(x,y) \equiv \mM^2[f_D].
\end{align*}
We use the transformation in \eqref{eq:transfoPoisson} and apply TPINNs to \eqref{eq:SchrExact} with the parameters in \Cref{tab:OverviewHP}. As regards MO-PINNs we set:
\be 
\begin{cases}
-\partial^2_{xx} \tV_0 - \uplambda\tV_0^3 & = \tC_D \\
-\partial^2_{yy} \tV_1 - \uplambda \tV_1^3 & = \tV_0.
\end{cases}
\ee
We represent $\text{diag}(\Sigma_\theta)$ and the pointwise error in \Cref{fig:sol2} for both TPINN architecture\rev{s}. On the left-hand figure, we portray the training collocation points. Both models achieved to manage and learn the non-linearity. To our knowledge, this is the first time that non-linear tensor operator equation is solved numerically. TPINNs give an accurate solution, with pointwise error being lesser than $2 \times 10^{-5}$ (resp.~$4 \times 10^{-4}$) for V-TPINN (resp.~MO-TPINN).
\begin{figure}[htb!]
\center
\includegraphics[width=1\textwidth]{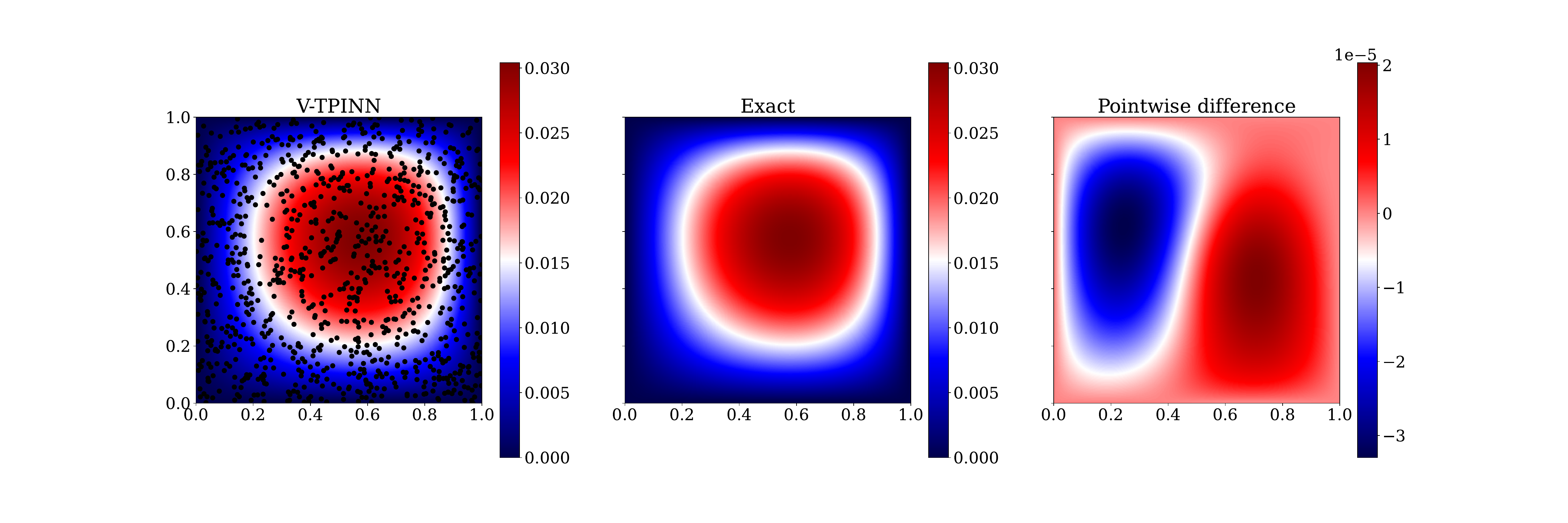}
\vspace{-.5cm}
\includegraphics[width=1\textwidth]{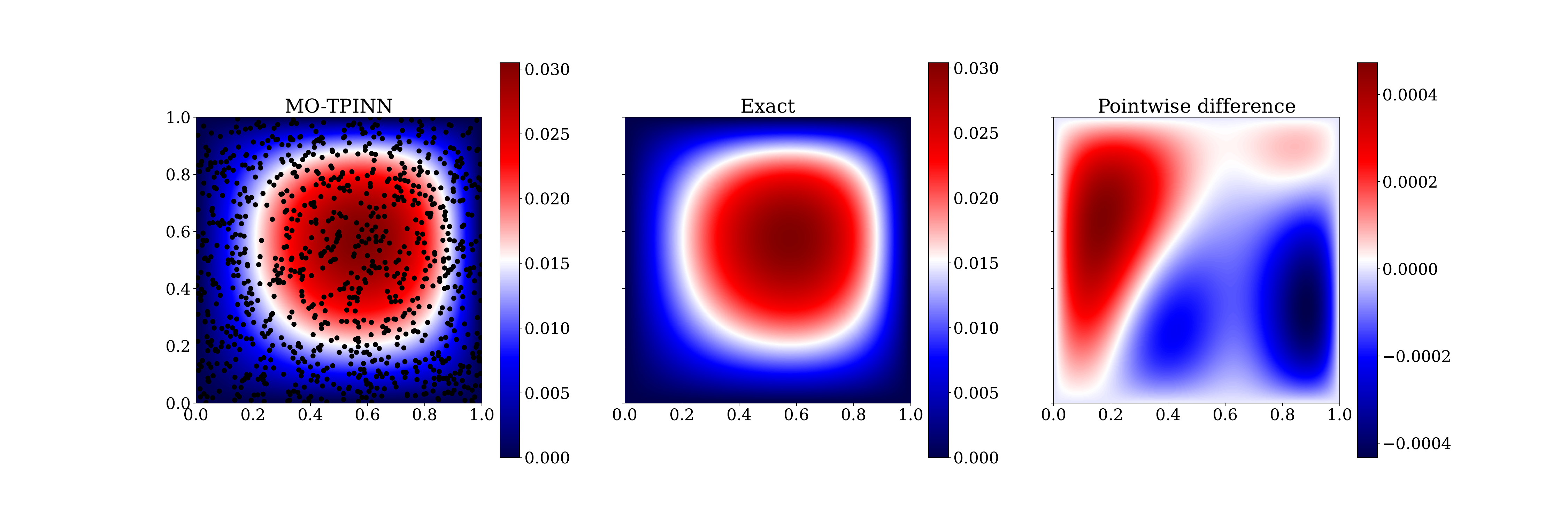}
\caption{Schrödinger 1D. V-TPINN (up) and MO-TPINN (bottom) for $\Sigma_\theta^\tk$. We represent the solution along with the training collocation points (left). The exact solution is represented (center) and the pointwise difference between the TPINN and exact solution (right).}
\label{fig:sol2}       
\end{figure}
To go further, the results are summarized in \Cref{tab:OverviewSchrodinger1D}. We notice that all models have low training errors and generalize well ($\mL_\theta \approx \mL_\theta^\text{test}$). V-TPINN gives a more precise solution than MO-TPINN despite showing similar test loss. We remark that both TPINNs provide a highly accurate approximation in less than $150$ seconds, which is remarkable. 
\begin{table}[ht!]
\renewcommand\arraystretch{1.5}
\begin{center}
\footnotesize
%\resizebox{8.5cm}{!} {
\begin{tabular}{
    |>{\centering\arraybackslash}m{.5cm}% instead of "p" is "m"
    ||>{\centering\arraybackslash}m{2cm}
    ||>{\centering\arraybackslash}m{2cm}
    |>{\centering\arraybackslash}m{2cm}
    |>{\centering\arraybackslash}m{2cm}
    ||>{\centering\arraybackslash}m{2cm}
    |>{\centering\arraybackslash}m{1.5cm}|
    }
    \hline
$\tk$ & Architecture & $\mL_\theta $ & $\mL_\theta^\text{test}$ & Error & Time (s) & $|\Theta|$ \\ \hline\hline
1 & - & $9.41\times 10^{-7}$ &$9.42\times 10^{-7}$ & $2.42\times 10^{-4}$ & $35.9$ & $7801$\\\hline \hline %ok
2 & V-TPINN & $4.49\times 10^{-5}$ &$4.97\times 10^{-5}$ & $9.71\times 10^{-4}$ & $146.8$ & $7851$\\\hline
2 & MO-TPINN & $4.73\times 10^{-5}$ &$5.54\times 10^{-5}$ & $1.23\times 10^{-2}$ & $65.4$ & $7902$\\\hline
 \end{tabular}%}
\end{center}
\caption{Schrödinger 1D. Overview of the results.}
 \label{tab:OverviewSchrodinger1D} 
\end{table}

\subsection{Helmholtz equation in two dimensions}\label{subsec:Helmholtz}We aim at verifying that TPINNs can 
 be applied to more complex problems, particularly,~for $d > 1$. We consider the solution of the Helmholtz equation over $D:= [0,1]^d$, $d=2$ with $n\in \IN_1$ and $\kappa := 2 \pi n$ such that
$$
-\Delta u(\omega) - \kappa^2 u (\omega)= f_D(\omega)\quad\text{with}\quad u(\omega)|_\Gamma=0\quad \IP\text{-a.e. }\omega \in \Omega.
$$
For $\tk=1$, we set
$$\label{eq:loadHelmholtz}
\IE[f_D] = \overline{f}_D(x,y) := \kappa^2 \sin(n \pi x) \sin (n \pi y),
$$
and use the transformation $(x,y) \mapsto x (x-1) y (y-1)$ to enforce the BCs. 

Let us focus now on $\tk=2$ and $\Sigma = \mM^2[u]$. One has that:
\begin{align*}
(\fA_D \otimes \fA_D) \Sigma &= (-\Delta_{\bx_1} - \kappa^2 ) \otimes(-\Delta_{\bx_2} - \kappa^2)\Sigma \\
& =  \Delta_{\bx_1}\Delta_{\bx_2} \Sigma + \kappa^2 \Delta_{\bx_1} \Sigma  + \kappa^2 \Delta_{\bx_2} \Sigma + \kappa^4 \Sigma .
\end{align*}
Boundary conditions are enforced via the transformation: 
\be
\hat{\Sigma} : = x (x-1) y (y-1) z (z-1) t (t-1) \Sigma.
\ee
Lastly, MO-TPINNs in \eqref{eq:MOPINNssol} rewrite as:
\be 
\begin{cases}
-\Delta_{\bx_1}\tV_0 - \kappa^2 \tV_0 &= \tC_D,\\
-\Delta_{\bx_2}\tV_1- \kappa^2 \tV_1 &=\tV_0.
\end{cases}
\ee
We apply TPINNs with the parameters in \Cref{tab:OverviewHP} for separable right-hand side, and exponential covariance kernel.
\subsubsection{Separable right-hand side}\label{subsubsec:HelmholtzSep}
We set $\bx_1 = (x,y)$, $\bx_2 = (z,t)$ and define the following separable right-hand side:
\be \label{eq:CovSeparable}
\tC_D (x,y,z,t) := \kappa^4 \sin(n \pi x)\sin(n\pi y)\sin(n \pi z)\sin(n\pi t)=\overline{f}_D\otimes \overline{f}_D
\ee
with $\overline{f}_D$ in \eqref{eq:loadHelmholtz}. We represent $\text{diag}(\Sigma^\tk_\theta)$ for V-TPINNs and MO-TPINNs in \Cref{fig:solhelmV} and \Cref{fig:solhelmTP}, respectively. As in previous subsections, TPINNs yield an accurate approximation to $\text{diag}(\Sigma^\tk_\theta)$, with pointwise error lesser than $0.04$ (resp.~$0.06$) for V-TPINNs (resp.~MO-TPINNs). It is worth to mention that $\|\Sigma\|_{L^\infty(D^{(\tk)})} = 1$.
\begin{figure}[htb!]
\center
\includegraphics[width=\textwidth]{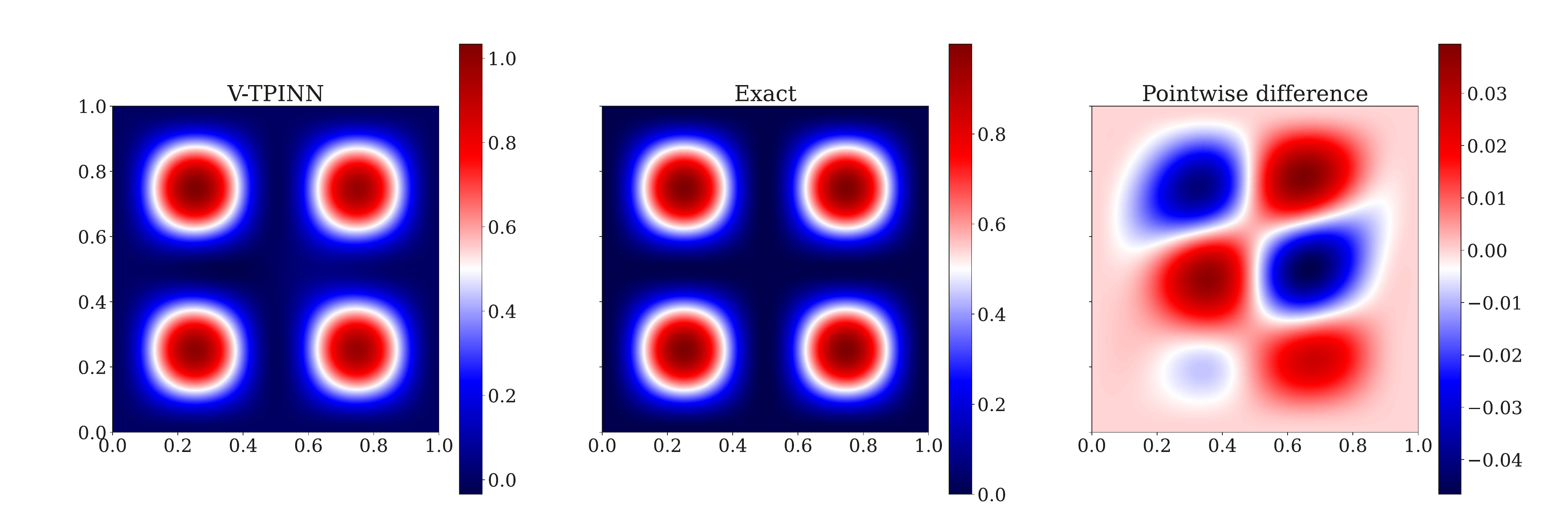}
\caption{Helmholtz 2D: Solution and pointwise error of V-TPINN for $\text{diag}(\Sigma^\tk_\theta)$ and a separable right-hand side.}
\label{fig:solhelmV}       
\end{figure}

\begin{figure}[htb!]
\center
\includegraphics[width=\textwidth]{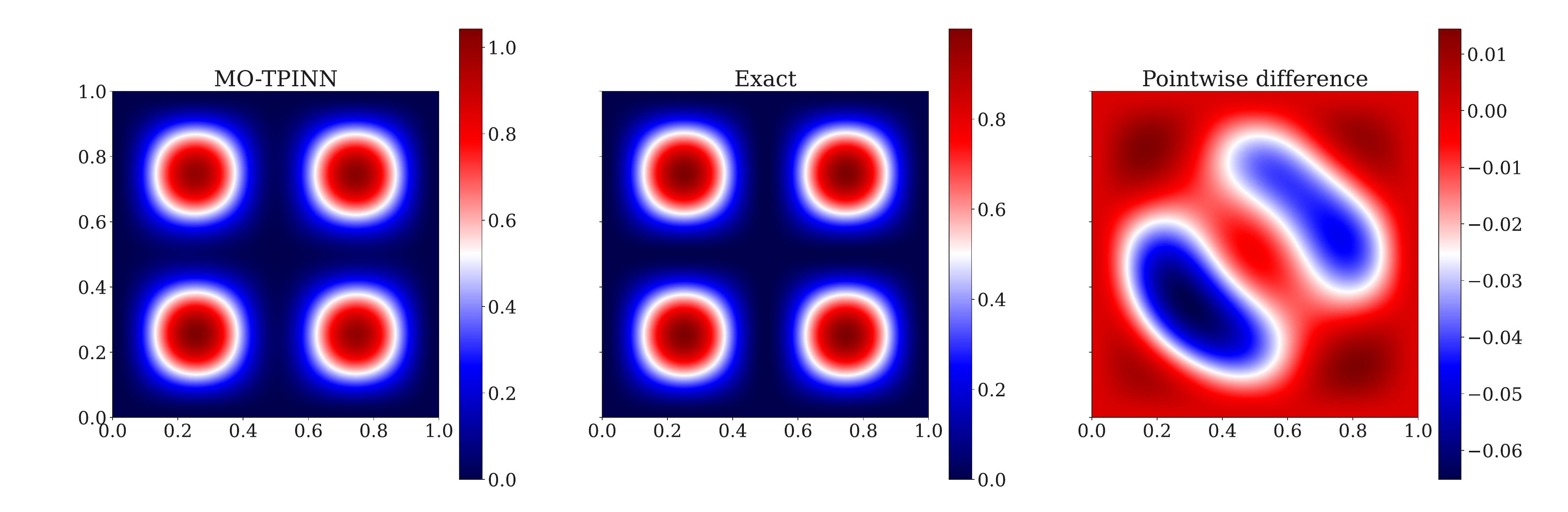}
\caption{Helmholtz 2D: Solution and pointwise error of MO-TPINN for $\text{diag}(\Sigma^\tk_\theta)$ and a separable right-hand side.}
\label{fig:solhelmTP}       
\end{figure}
In \Cref{tab:OverviewHelmholtz1}, we give further details on performance. We remark that: (i) the models train and generalize well; (ii) losses and error for both TPINNs are similar, and around $100$ times less precise than for $\tk=1$. Furthermore, MO-TPINNs yield similar training time to case $\tk=1$, for an error of $7.09\%$, outperforming V-TPINNs in this case: the error for MO-TPINNs is $22\%$ higher than for V-TPINNs, for a $5$-fold faster training. 

\begin{table}[ht!]
\renewcommand\arraystretch{1.5}
\begin{center}
\footnotesize
%\resizebox{8.5cm}{!} {
\begin{tabular}{
    |>{\centering\arraybackslash}m{.5cm}% instead of "p" is "m"
    ||>{\centering\arraybackslash}m{2cm}
    ||>{\centering\arraybackslash}m{2cm}
    |>{\centering\arraybackslash}m{2cm}
    |>{\centering\arraybackslash}m{2cm}
    ||>{\centering\arraybackslash}m{2cm}
    |>{\centering\arraybackslash}m{1.5cm}|
    }
    \hline
$\tk$ & Architecture & $\mL_\theta $ & $\mL_\theta^\text{test}$ & Error & Time (s) & $|\Theta|$ \\ \hline\hline
1 & - & $5.93\times 10^{-4}$ &$5.41\times 10^{-4}$ & $4.52\times 10^{-4}$ & $159.3$ & $124251$\\\hline \hline 
2 & V-TPINN & $1.33\times 10^{1}$ &$1.74\times 10^{1}$ & $5.82\times 10^{-2}$ & $814.2$ & $124951$\\\hline
2 & MO-TPINN & $3.59\times 10^{1}$ &$4.05\times 10^{1}$ & $7.09\times 10^{-2}$ & $147.3$ & $125302$\\\hline 
 \end{tabular}%}
\end{center}
\caption{Helmholtz 2D: Overview of the results for a simple right-hand side.}
 \label{tab:OverviewHelmholtz1} 
\end{table}

\subsubsection{Exponential covariance kernel}\label{subsubsec:HelmholtzExp}Separable right-hand sides allow to use a manufactured solution. Once convergence is verified, application of TPINNs to general right-hand side is straightforward. We introduce the squared exponential (Gaussian) and $1$-exponential covariance functions $f_D \sim \mG\mP (\overline{f}_D , \tC_D)$ \cite[Fig.~2]{MIKAMatrixFreeKL} with $\overline{f}_D$ in \eqref{eq:loadHelmholtz} and 
\be\label{eq:kernelExponential}
\tC_D(\bx_1,\bx_2) = \sigma^2 \exp\left(- \frac{|x-z|^2}{2 \lambda}\right)\quad \text{and}\quad \tC (\bx_1,\bx_2) = \sigma^2 \exp \left(- \frac{|x - z|}{2 \lambda}\right)
\ee 
for $\bx_1 =(x,y)$, $\bx_2=(z,t)$, $\sigma\in \IR$ and $\lambda  >0$. We apply TPINNs again for $\sigma = \kappa^2$ and $\lambda = 5$. To begin with, in \Cref{fig:kernelshelm} we represent $\text{diag}(\tC_D)$ for the three right-hand sides considered. 

\begin{figure}[htb!]
\center
\includegraphics[width=\textwidth]{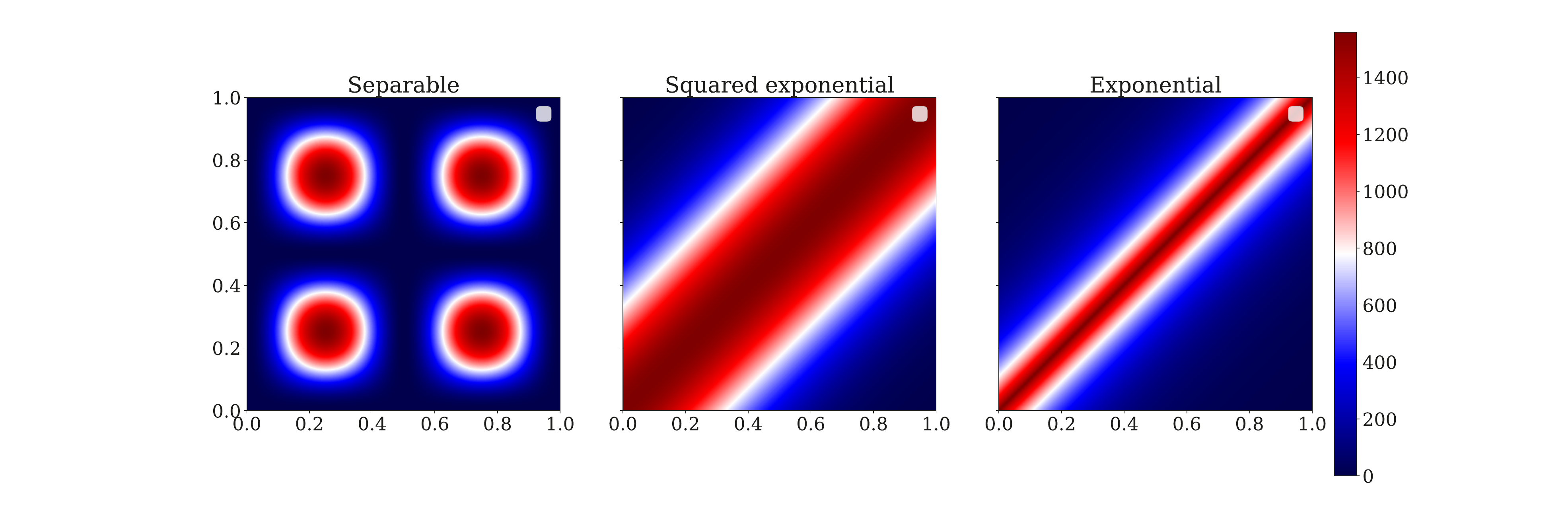}
\caption{Helmholtz 2D: Plot of $\text{diag}(\tC_D)$ defined in \eqref{eq:CovSeparable} and \eqref{eq:kernelExponential}.}
\label{fig:kernelshelm}       
\end{figure}

Next, we plot the diagonal of V-TPINN and MO-TPINN for the Gaussian kernel in \Cref{fig:Gaussiansol}. Remark that the solutions differ. Likewise, the diagonal for exponential kernel is represented in \Cref{fig:Exponentialsol}. This time, the solutions have a more similar pattern. To give a further insight on these simulations, we summarize the results in \Cref{tab:OverviewHelmholtzKernel}. First, we remark that V-TPINNs failed to train in the Gaussian case ($\mL_\theta = 1.34 \times 10^{3}$ while $\mL_\theta^\text{test} = 1.08 \times 10^6$). This behavior was quite unexpected, as Gaussian kernels are smoother than $1$-exponential kernels. Training losses are $\sim 10^3$ for exponential kernel, as compared to $\sim 10^1$ for separable right-hand sides in \Cref{tab:OverviewHelmholtz1}: training loss is sensitive to the right-hand side.

\begin{figure}[htb!]
\begin{subfigure}{.5\textwidth}
\center
\includegraphics[width=\textwidth]{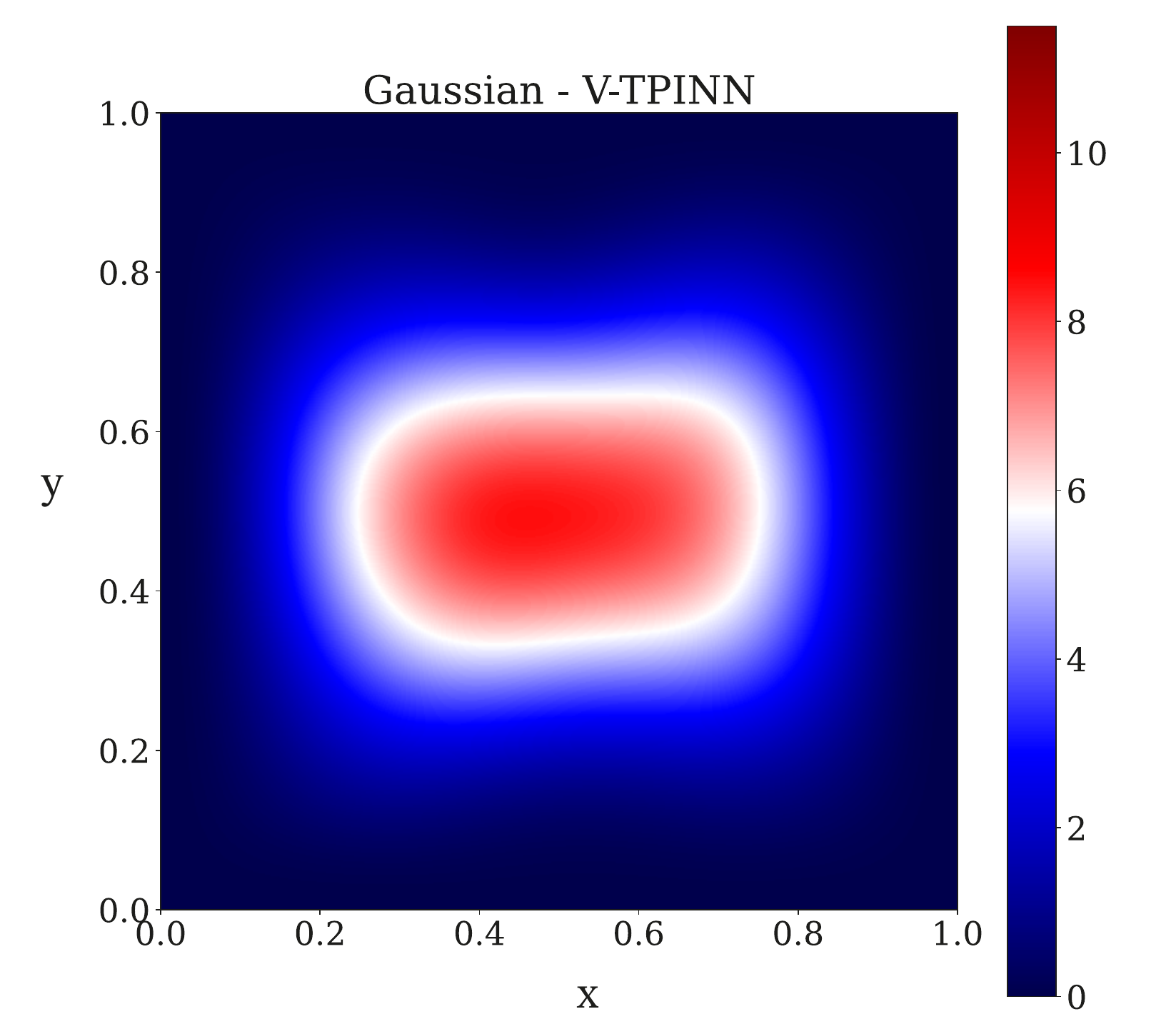}
\end{subfigure}
\begin{subfigure}{.5\textwidth}
\includegraphics[width=\textwidth]{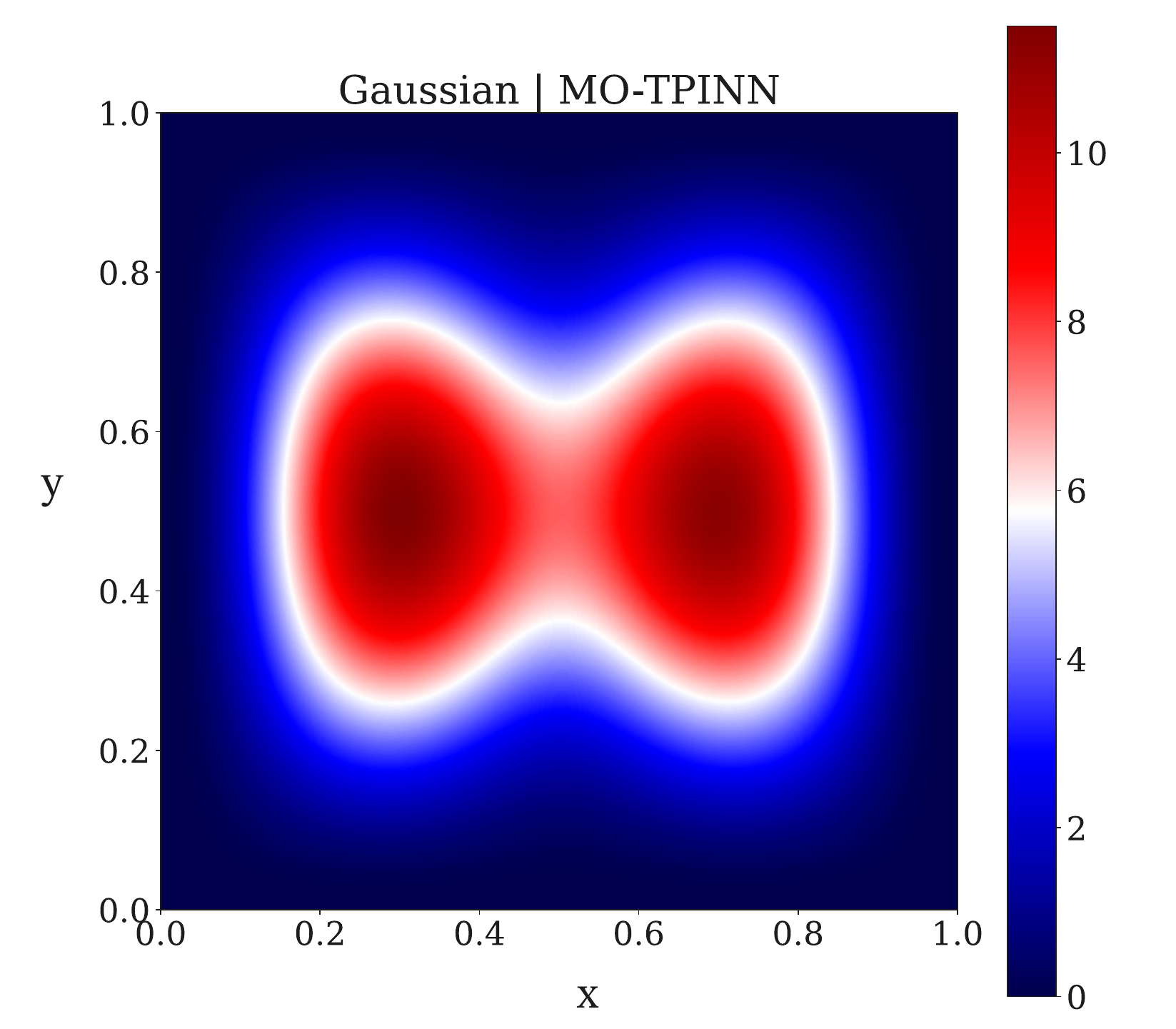}
\end{subfigure}
\caption{Helmholtz 2D: Solution for the Gaussian kernel in \eqref{eq:kernelExponential}.}
\label{fig:Gaussiansol}       
\end{figure}

\begin{figure}[htb!]
\begin{subfigure}{.5\textwidth}
\center
\includegraphics[width=\textwidth]{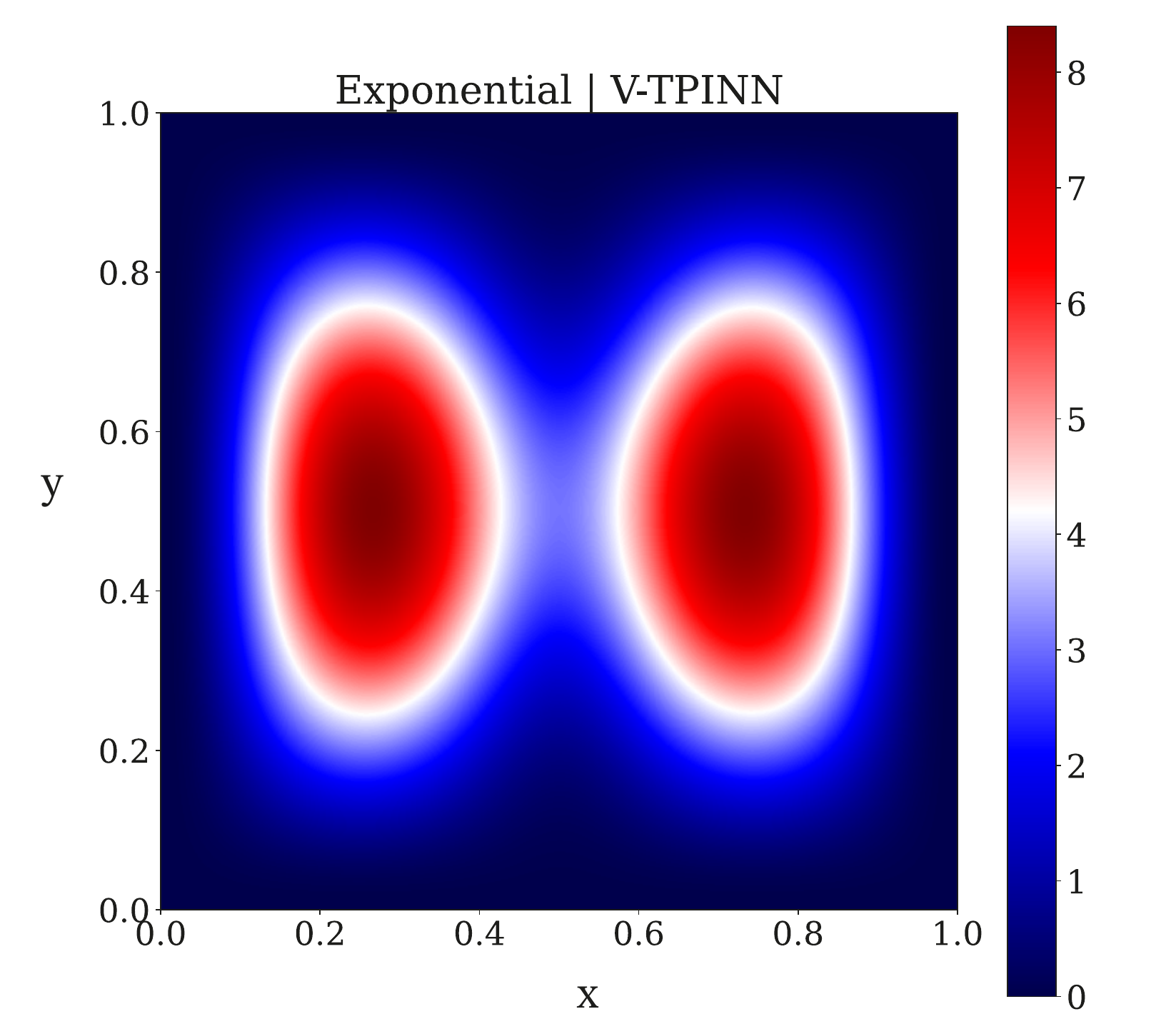}
\end{subfigure}
\begin{subfigure}{.5\textwidth}
\includegraphics[width=\textwidth]{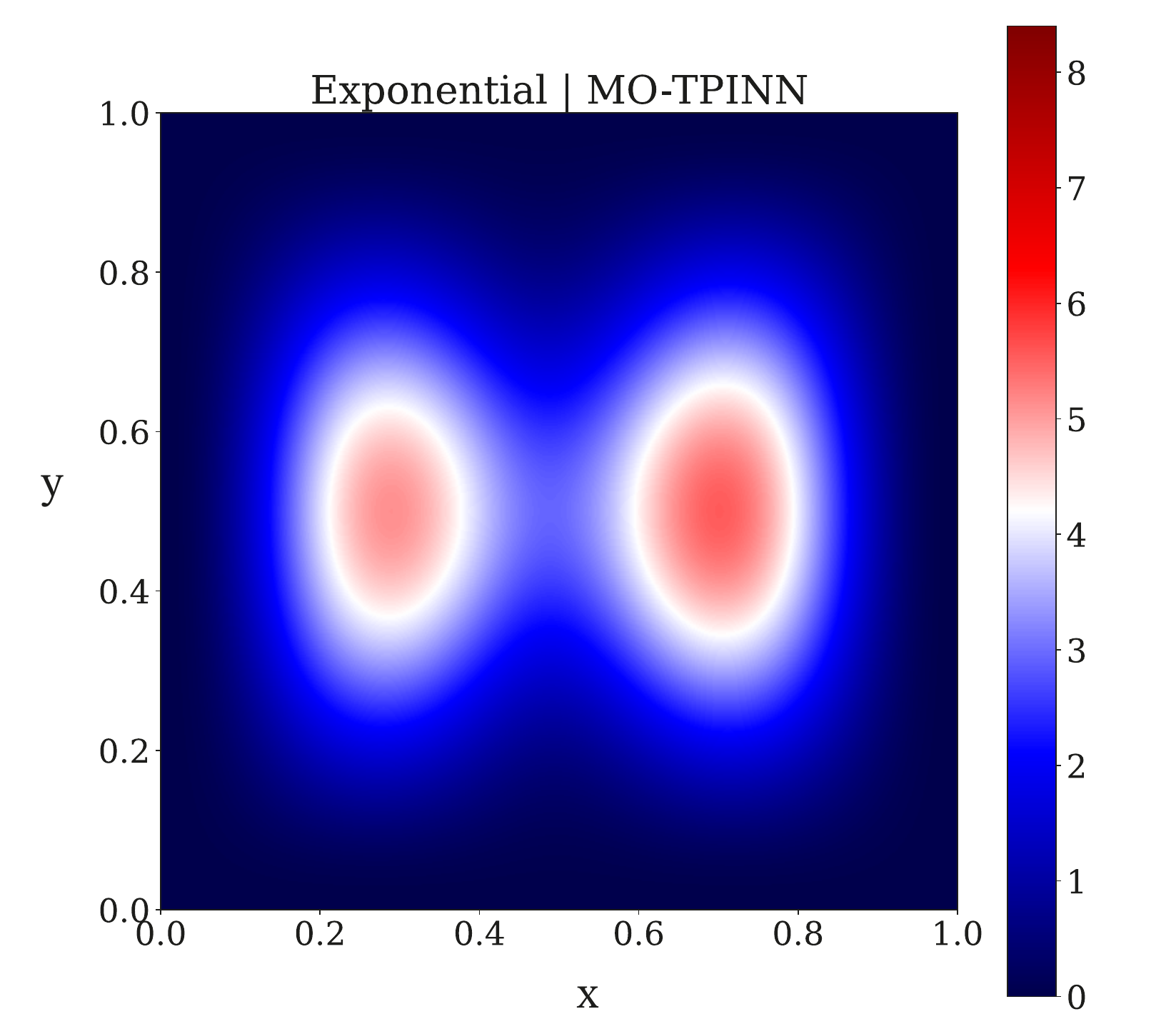}
\end{subfigure}
\caption{Helmholtz 2D: Solution for the $1$-exponential kernel in \eqref{eq:kernelExponential}.}
\label{fig:Exponentialsol}       
\end{figure}

\begin{table}[ht!]
\renewcommand\arraystretch{1.5}
\begin{center}
\footnotesize
%\resizebox{8.5cm}{!} {
\begin{tabular}{
    |>{\centering\arraybackslash}m{.5cm}% instead of "p" is "m"
    ||>{\centering\arraybackslash}m{2cm}
    ||>{\centering\arraybackslash}m{2cm}
    |>{\centering\arraybackslash}m{2cm}
    |>{\centering\arraybackslash}m{2cm}
    ||>{\centering\arraybackslash}m{2cm}
    |>{\centering\arraybackslash}m{1.5cm}|
    }
    \hline
\multirow{2}{*}{$\tk$} &  \multirow{2}{*}{Architecture} & \multicolumn{2}{|c|}{Gaussian} & \multicolumn{2}{|c|}{Exponential}\\ \cline{3-6}
&&   $\mL_\theta $ & $\mL_\theta^\text{test}$ &$\mL_\theta $ & $\mL_\theta^\text{test}$ \\ \hline \hline
2 & V-TPINN & $1.34\times 10^{+3}$ &$1.08\times 10^{+6}$ & $2.13\times 10^{+3}$ & $2.02\times 10^{+4}$ \\\hline
2 & MO-TPINN & $5.66\times 10^{+3}$ &$6.77\times 10^{+3}$ & $1.32\times 10^{+3}$ & $1.59\times 10^{+3}$ \\\hline
 \end{tabular}%}
\end{center}
\caption{Helmholtz 2D: Overview of the results for exponential covariance kernels.}
 \label{tab:OverviewHelmholtzKernel} 
\end{table}

\subsection{Heat equation}\label{subsec:Heat}
To finish, we apply the scheme to a time dependent operator. We consider the heat equation in $[0,L]\times[0,T]$:
\begin{align*}
\partial_t u(\omega) - a \Delta u(\omega) = f_D(\omega) \quad \text{with}\quad  u(\omega)|_\Gamma = 0\quad \text{and}\quad u(\omega, t= 0) = \sin(n \pi x) \quad \IP\text{-a.e. }\omega \in \Omega.
\end{align*}
wherein we set $L=T=n=1$ and $a=0.4$. We consider the exact solution for $\tk=1$ with $\IE[f]=0$:
\be 
\IE[u] = \overline{u}(x,t) = \sin(n \pi x / L) e^{-n \pi^2 a t} 
\ee
and apply the following transformation:
\be 
\hat{u}(x,t) = \sin(n \pi x) e^{- n \pi^2 a t}  + t u(x,t).
\ee

\subsubsection{Separable right-hand side}\label{subsubsec:HeatSeparable}
For $\tk=2$, we use the following manufactured solution: 
$$
\Sigma(x,t,y,z)= \overline{u}(x,t) \overline{u}(y,z).
$$
i.e
$$
\Sigma (x,t,y,t) = e^{- \pi^2 at} e^{- \pi^2a z} \sin(\pi x) \sin (\pi z)
$$
for $a=0.4$ and notice that:
\begin{align*}
(\fA_D \otimes \fA_D) \Sigma & = (\partial_t - a \Delta_{x})(\partial_z - a \Delta_y)\Sigma\\
&= \partial_t \partial_z \Sigma - a \partial_t \Delta_y \Sigma- a \Delta_x \partial_z \Sigma+ a^2 \Delta_x\Delta_y \Sigma\\
& = \partial^2_{tz} \Sigma - a \partial^3_{tyy} \Sigma - a \partial^3_{xxz} \Sigma + a^2 \partial^4_{xxyy} \Sigma.
\end{align*}

We set the transformation:
\be 
\hat{\Sigma} (x,t,y,z) = e^{- \pi^2t} e^{- \pi^2 z} \left(  \sin(\pi x) \sin (\pi z) +  t z \Sigma (x,t,y,z)\right) .
\ee 
For the MO-TPINNs, one has that:
\be 
\begin{cases}
-\partial_t \tV_0 - \partial^2_{xx}\tV_0  &= \tC_D,\\
-\partial_z \tV_1 -\partial^2_{yy}\tV_1 &=\tV_0.
\end{cases}
\ee 
with $\tC_D=0$ in this case (see definition in \Cref{subsubsec:HelmholtzExp} for the exponential case.) We solve the problem with parameters in \Cref{tab:OverviewHP}, and sum up results in \Cref{tab:OverviewHeat}. We remark that the neural nets train well and supply accurate solutions in a few minutes. It is worthy to notice that V-TPINN (resp.~MO-TPINN) yields a solution with relative error of $8.87 \times 10^{-4}$ (resp.~$6.60 \times 10^{-4}$) despite being trained over nets with less than $1000$ trainable parameters. This demonstrates the learning capacity of PINNs and TPINNs. 
\begin{table}[ht!]
\renewcommand\arraystretch{1.5}
\begin{center}
\footnotesize
%\resizebox{8.5cm}{!} {
\begin{tabular}{
    |>{\centering\arraybackslash}m{.5cm}% instead of "p" is "m"
    ||>{\centering\arraybackslash}m{2cm}
    ||>{\centering\arraybackslash}m{2cm}
    |>{\centering\arraybackslash}m{2cm}
    |>{\centering\arraybackslash}m{2cm}
    ||>{\centering\arraybackslash}m{2cm}
    |>{\centering\arraybackslash}m{1.5cm}|
    }
    \hline
$\tk$ & Architecture & $\mL_\theta $ & $\mL_\theta^\text{test}$ & Error & Time (s) & $|\Theta|$ \\ \hline\hline
1 & - & $2.15\times 10^{-8}$ &$2.05\times 10^{-8}$ & $1.84\times 10^{-5}$ & $49.8$ & $921$\\\hline \hline 
2 & V-TPINN & $1.16\times 10^{-5}$ &$1.27\times 10^{-5}$ & $8.87\times 10^{-4}$ & $226.9$ & $961$\\\hline
2 & MO-TPINN & $1.02\times 10^{-6}$ &$1.03\times 10^{-6}$ & $6.40\times 10^{-3}$ & $74.6$ & $982$\\\hline 
 \end{tabular}%}
\end{center}
\caption{Heat 1D: Overview of the results for a simple right-hand side.}
 \label{tab:OverviewHeat} 
\end{table}
\subsubsection{Exponential covariance kernel}\label{subsubsec:HeatExp}
Next, we set the Gaussian and $1$-exponential covariance kernels for $\bx_1 = (x,t)$ and $\bx_2 = (y,z)$ as:
$$%\label{eq:kernelExponentialHeat}
\tC_D (\bx_1,\bx_2) = \sigma^2 \exp\left(- \frac{|x- y|^2}{2 \lambda }-2 (t-z)\right)  \quad \text{and}\quad \tC_D (\bx_1,\bx_2) = \sigma^2 \exp \left(- \frac{|x-y|}{2 \lambda}-2 (t-z)\right)
$$ 
respectively, with $\sigma = 10$ and $\lambda =5$. Acknowledge that TPINNs do not generalize as well as in previous sections, despite delivering low test losses of $7.71 \times 10^{-2}$ (resp.~$4.80 \times 10^{-4}$) for V-TPINN (resp.~MO-TPINN). It is important to notice that in this case, MO-TPINNs leads to an approximation with \rev{a} lower test error and training time (by a factor of $3$). To finish, we represent $\text{diag}(\Sigma_\theta)$ for both covariance kernels in \Cref{fig:Heat-Gaussiansol} and \Cref{fig:Heat-Exponentialsol}. Notice that the difference between solutions is low but noticeable in \Cref{fig:Heat-Gaussiansol}, while being imperceptible in \Cref{fig:Heat-Exponentialsol}. It is again interesting to notice that Gaussian kernel is harder to train that its $1$-exponential counterpart. 

%To conclude numerical experiments, it is worth to notice that the performance of V-TPINNs and MO-TPINNs were 

\begin{table}[ht!]
\renewcommand\arraystretch{1.5}
\begin{center}
\footnotesize
%\resizebox{8.5cm}{!} {
\begin{tabular}{
    |>{\centering\arraybackslash}m{.5cm}% instead of "p" is "m"
    ||>{\centering\arraybackslash}m{2cm}
    ||>{\centering\arraybackslash}m{2cm}
    |>{\centering\arraybackslash}m{2cm}
    |>{\centering\arraybackslash}m{2cm}
    ||>{\centering\arraybackslash}m{2cm}
    |>{\centering\arraybackslash}m{1.5cm}|
    }
    \hline
\multirow{2}{*}{$\tk$} &  \multirow{2}{*}{Architecture} & \multicolumn{2}{|c|}{Gaussian} & \multicolumn{2}{|c|}{Exponential}\\ \cline{3-6}
&&   $\mL_\theta $ & $\mL_\theta^\text{test}$ &$\mL_\theta $ & $\mL_\theta^\text{test}$ \\ \hline \hline
2 & V-TPINN & $1.17\times 10^{-2}$ &$7.71\times 10^{-2}$ & $1.26\times 10^{-2}$ &$5.50\times 10^{-2}$ \\\hline
2 & MO-TPINN & $4.80\times 10^{-4}$ &$8.04\times 10^{-4}$ & $6.64\times 10^{-4}$ & $1.01\times 10^{-3}$ \\\hline
 \end{tabular}%}
\end{center}
\caption{Heat 1D: Overview of the results for exponential covariance kernels.}
 \label{tab:OverviewHeatKernel} 
\end{table}

\begin{figure}[htb!]
\begin{subfigure}{.5\textwidth}
\center
\includegraphics[width=\textwidth]{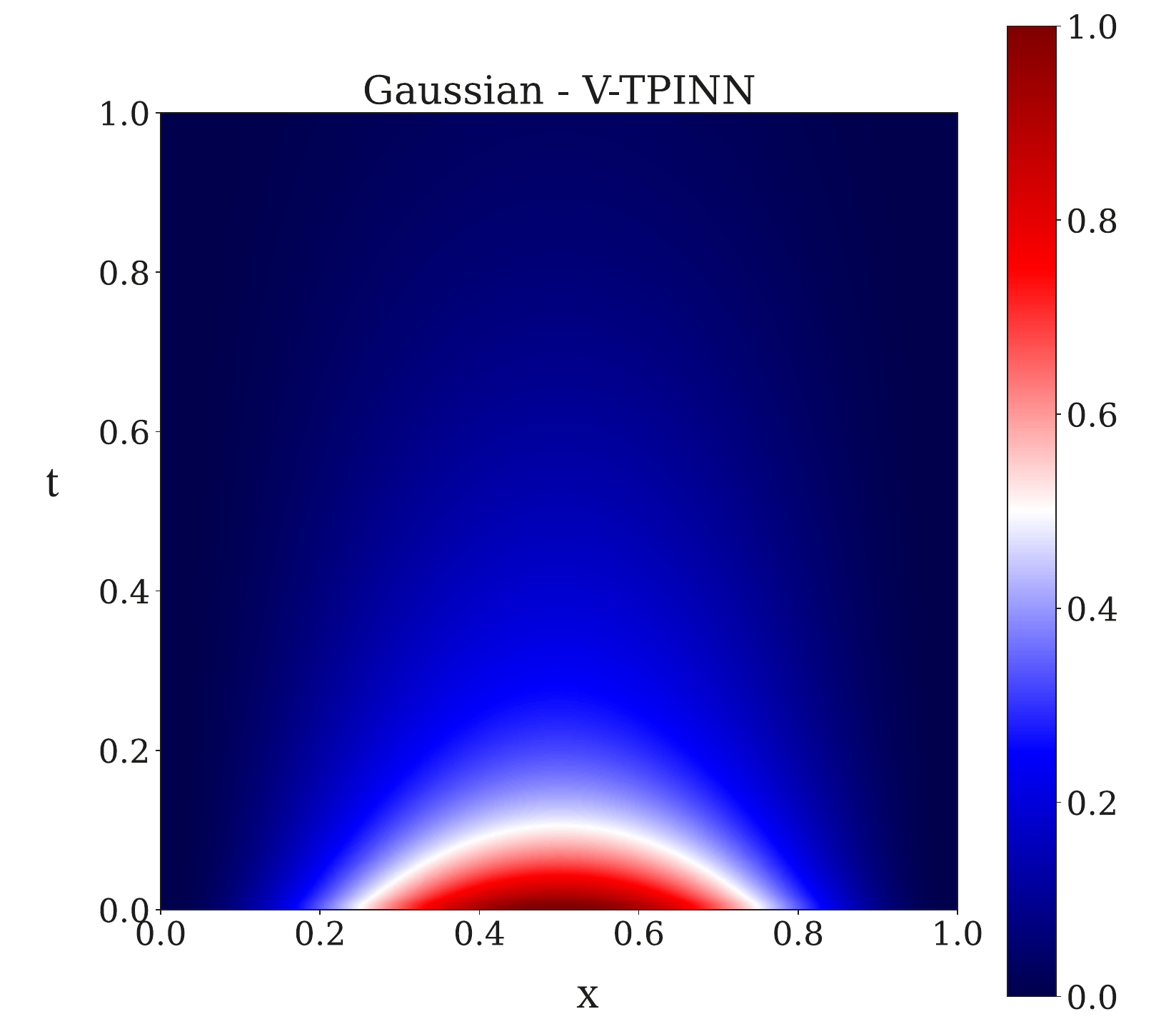}
\end{subfigure}
\begin{subfigure}{.5\textwidth}
\includegraphics[width=\textwidth]{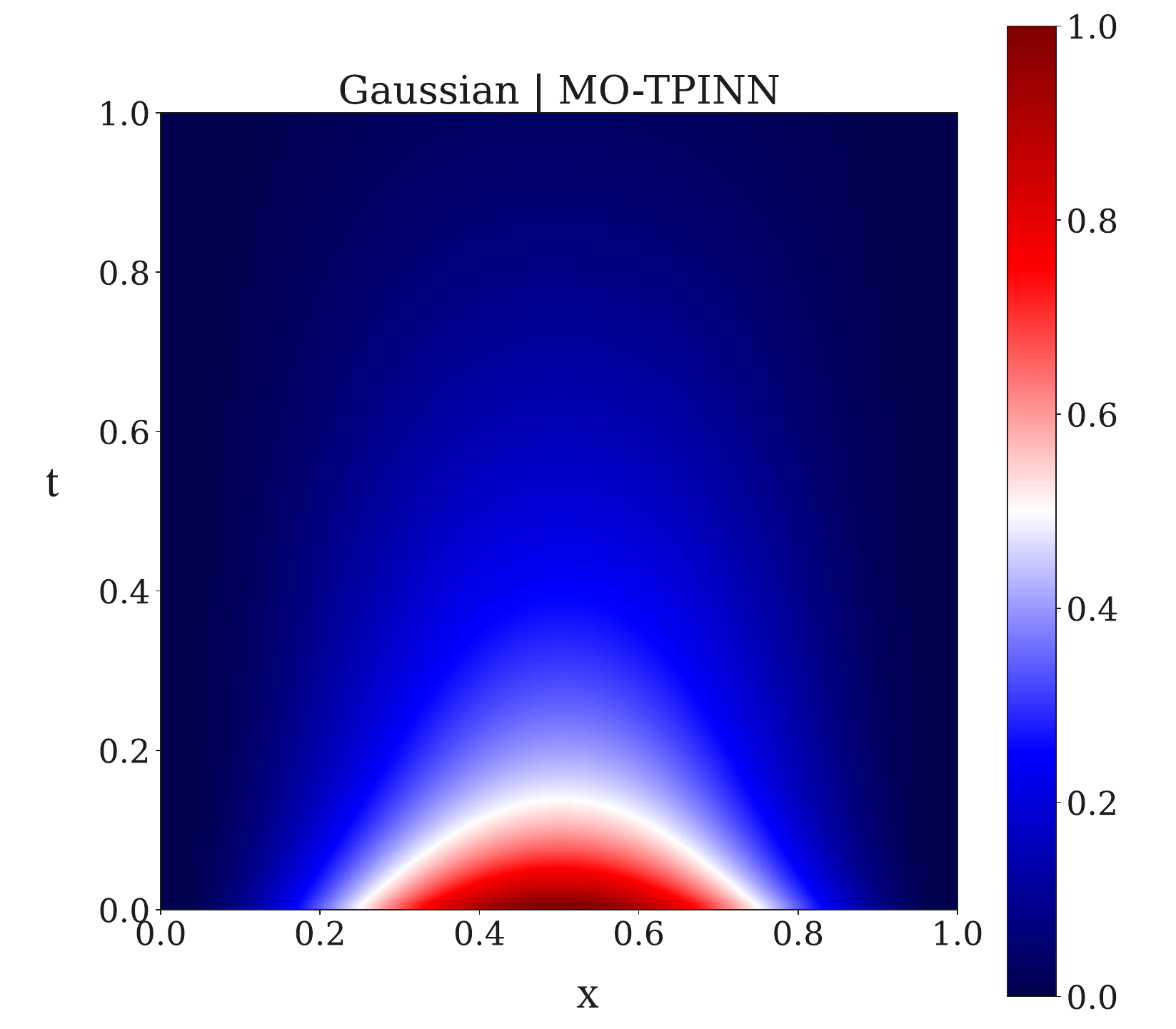}
\end{subfigure}
\caption{Heat 1D: Solution for the Gaussian kernel.}
\label{fig:Heat-Gaussiansol}       
\end{figure}

\begin{figure}[htb!]
\begin{subfigure}{.5\textwidth}
\center
\includegraphics[width=\textwidth]{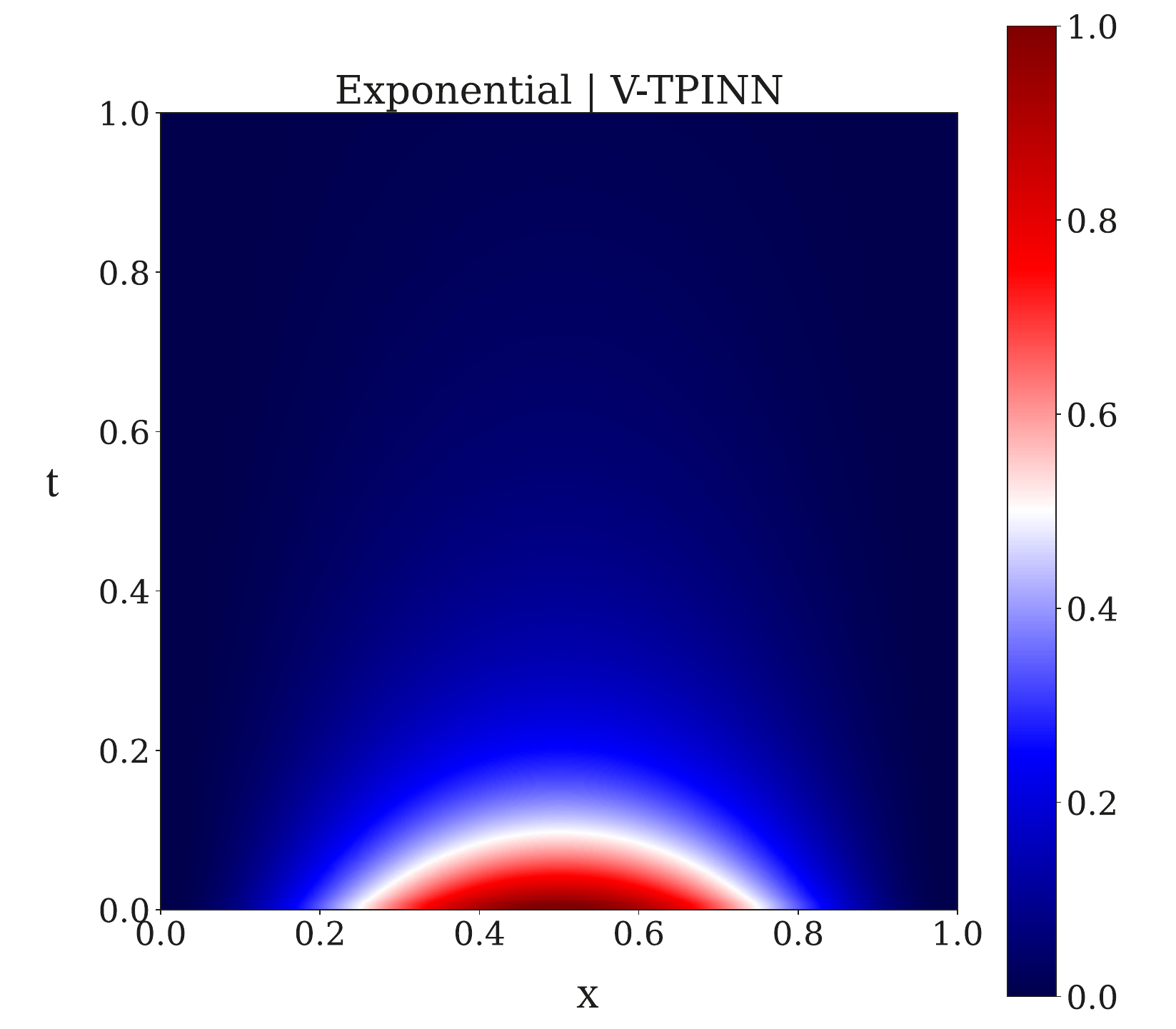}
\end{subfigure}
\begin{subfigure}{.5\textwidth}
\includegraphics[width=\textwidth]{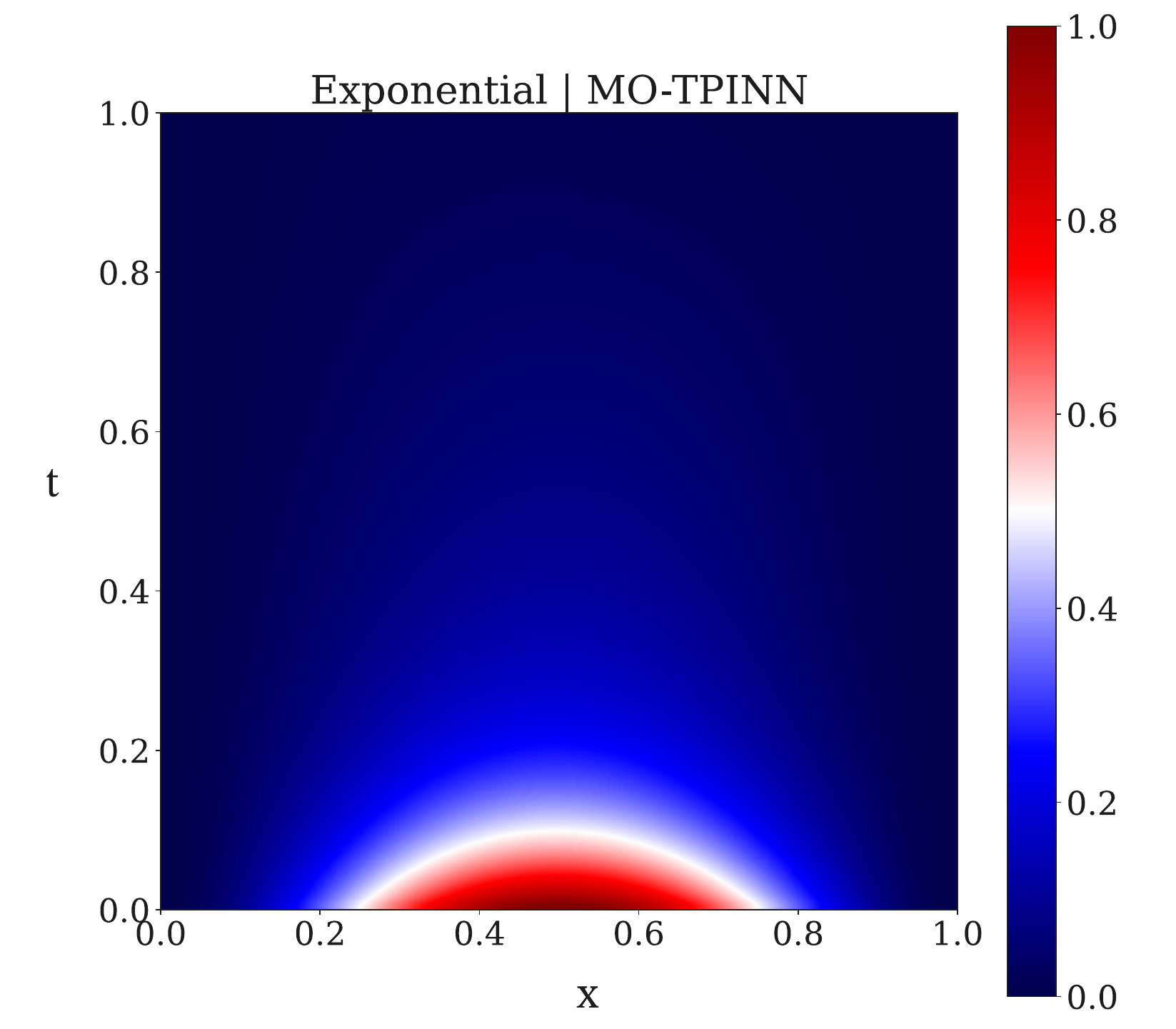}
\end{subfigure}
\caption{Heat 1D: Solution for the $1$-exponential kernel.}
\label{fig:Heat-Exponentialsol}       
\end{figure}

\section{Conclusion}\label{sec:Conclusion}
In this work, we introduced TPINNs to solve operator equations with stochastic right-hand side. We considered two architectures for TPINNs, namely V-TPINNs and MO-TPINNs, both showing efficient performance for \rev{across} a variety of operators. We included a bound for the generalization error for V--PINNs in \Cref{thm:boundGeneralization}, paving the way toward robust TPINNs and \rev{aiding in better understanding} their behavior. Our numerical experiments proved that TPINNs are a powerful technique, applicable to challenging problems. It is interesting to notice that none of them \rev{consistently outperformed} the other, justifying their introduction. \rev{Although a promising approach to overcoming the curse of dimensionality for tensor operator equations, V-TPINNs (resp. ML-TPINNs) have shown an increasing cost with $\tk$ due to automatic differentiation (resp.~difficulties inherent to training), opening research pathways towards more efficient schemes.} 

Further work includes coupling the results in \Cref{sec:Convergence} with shift theorems \cite{schwab2003sparseHigh} to supply complete convergence bounds of the behavior of PINNs. Also, TPINNs could be applied to UQ for small amplitude perturbed random domains \cite{sparse3,escapil2020helmholtz}. Alongside, hard BCs are no longer applicable for complex geometries, justifying the use of soft BCs TPINNs.

Finally, MO-TPINNs proved to be a surprisingly fast technique. In our opinion, \rev{improving training} is a promising research area. This could \rev{involve}: (i) optimizing weights during training, \rev{for example through} learning rate annealing \cite{gradientpathologies}; or (ii) \rev{conducting} hyperparameter optimization \cite{escapil2022hyper}. We also mention generalization bounds for MO-TPINNs. 

\bibliographystyle{siamplain}
\bibliography{references}
\end{document}